\def\BibTeX{{\rm B\kern-.05em{\sc i\kern-.025em b}\kern-.08em
    T\kern-.1667em\lower.7ex\hbox{E}\kern-.125emX}}
\newtheorem{theorem}{Theorem}
\newtheorem{theorem*}{Theorem}
\newtheorem{lemma}{Lemma}
\newtheorem{proposition}{Proposition}
\newtheorem{corollary}{Corollary}
\newtheorem{example}{Example}
\begin{document}

\title{Theoretical Investigation of Composite Neural Network}

\author{
\IEEEauthorblockN{
Ming-Chuan Yang\IEEEauthorrefmark{1}, 
and Meng Chang Chen\IEEEauthorrefmark{2}  }\\
\IEEEauthorblockA{ \textit{Institute of Information Science, Academia Sinica}, Taiwan}\\
\IEEEauthorblockA{\IEEEauthorrefmark{1}{mingchuan@iis.sinica.edu.tw},
\IEEEauthorrefmark{2}{mcc@iis.sinica.edu.tw}}
}

\maketitle

\begin{abstract}
This work theoretically investigates the performance of a composite neural network.
A composite neural network is a rooted directed acyclic graph combining a set of pre-trained and non-instantiated neural network models, where a pre-trained neural network model is well-crafted for a specific task and targeted to approximate a specific function with instantiated weights.  
 The advantages of adopting such a pre-trained model in a composite neural network are two folds. One is to benefit from other's intelligence and diligence, and the other is saving the efforts in data preparation and resources and time in training. However, the overall performance of composite neural network is still not clear. In this work, we prove that a composite neural network, with high probability, performs better than any of its pre-trained components under certain assumptions. In addition, if an extra pre-trained component is added to a composite network, with high probability the overall performance will be improved. In the empirical evaluations, distinctively different applications support the above findings.   
\end{abstract}

\begin{IEEEkeywords}
composite neural network, pre-trained component, non-instantiated component
\end{IEEEkeywords}

\section{Intrduction}
Deep learning has been a great success in dealing with natural signals, e.g., images and voices, as well as artifact signals, e.g., nature language, while it is still in the early stage in handling sophisticated social and natural applications shaped by very diverse factors (e.g., stock market prediction), or resulted from complicated processes (e.g., pollution level prediction). One of distinctive features of the complicated applications is their applicable data sources are boundless. Consequently, their solutions need frequent revisions. Although neural networks can approximate arbitrary functions as close as possible  \cite{Hornik1991approximation}, 
the major reason for not existing such competent neural networks for those complicated applications is their problems are hardly fully understood and their applicable data sources cannot be identified all at once. By far the best practice is the developers pick a seemly neural network with available data to hope for the best. The apparent drawbacks, besides the performance, are the lack of flexibility in new data source emergence, better problem decomposition, and the opportunity of employing proven efforts from others.  On the other hand, some adopts a composition of several neural network models, based on function composition using domain knowledge.

An emerging trend of deep learning solution development is to employ  well crafted pre-trained neural networks (i.e., neural network models with instantiated weights), especially used as a component in a composited neural network model. Most popular pre-trained neural network models are well fine tuned with adequate training data, and made available to the public, either free or as a commercial product. During the training phase of composite neural network, the weights of pre-trained models are frozen to maintain its good quality and save the training time, while the weights of their outgoing edges are trainable. In some cases as in the transfer learning, the weights of pre-trained neural network are used as initial values in the training phase of composite neural network. 

It is intuitive that a composite neural network should perform better than any of its components. The ensemble learning \cite{freund1997decision,zhou2012ensemble} and the transfer learning \cite{galanti2016theoretical} have great success and are popular when pre-trained models are considered. 
However, in the transfer learning, how to overcome the negative transfer (a phenomenon of a pre-trained model has negative impact on the target task) is an important issue \cite{seah2013combating}. In the ensemble learning, it is well known that the adding more pre-trained models, it is not always true to have the better accuracy of the ensemble \cite{zhou2002ensembling}. Furthermore, \cite{opitz1999popular} pointed that the ensemble by boosting  having less accuracy than a single pre-trained model often happens for neural networks. In the unsupervised learning context, some experimental research  concludes that although layer-wise pre-training can be significantly helpful, on average it is slightly harmful \cite{goodfellow2016deep}. 
These empirical evidences suggest that in spite of the success of the ensemble learning and the transfer learning, the conditions that composite neural network can perform better is unclear, especially in the deep neural networks training process. 

The topology of a composite neural network can be represented as a rooted directed graph. For instance, an ensemble learning can be represented as 1-level graph, while a composite neural network with several pre-trained models that each is designed to solve a certain problem corresponds to a more complicated graph.
It is desired to discover a mathematical theory, in addition to employing domain knowledge, to construct a composite neural network with guaranteed overall performance.
In this work, we investigate the mathematical theory
to ensure the overall performance of a composite neural network is better than any a pre-trained component, regardless the way of composition, to allow deep learning application developer great freedom in constructing a high performance composite neural network.

\section{Preliminaries}
In this Section, we introduce some notations and definitions about composite neural network.
Parameters $N$,$K$, $d$, $d_j$, $d_{j_1}$, and $d_{j_2}$ are positive integers. Denote $\{1,...,K\}$ as $[K]$ and $[K]\cup\{0\}$ as $[K]^{+}$.
Let $\sigma: \mathbb{R}\to \mathbb{R}$ be a differentiable activation function, such as 
the Logistic function $\sigma(z)=1/(1+e^{-z})$ 
and the hyperbolic tangent $\sigma(z)=(e^{z}-e^{-z})/(e^{z}+e^{-z})$. 
For simplicity of notation, we sometimes abuse $\sigma$ as a vector value function.    
A typical one hidden layer neural network can be formally presented as
$w_{1,1}\sigma\left(\sum_{i=1}^{d} w_{0,i} \mathbf{x}_i+w_{0,0} \right)+w_{1,0}$.
We abbreviate it as $f_{\sigma,\mathbf{W}}(\mathbf{x})$, where $\mathbf{W}$ is the matrix defined by$w_{1,1},w_{1,0},...,w_{0,1},w_{0,0}$. 
Recursively applying this representation can obtain the neural network with more hidden layers. 
If there is no ambiguity on the activation function, then it can be skipped as $f_{\mathbf{W}}(\mathbf{x})$. 
Now assume a set of neural networks $\{f_{\mathbf{W}_j}(\mathbf{x}_j)\}_{j=1}^{K}$ is given, where $\mathbf{W}_j$ is the real number matrix defining the neural network $f_{\mathbf{W}_j}:\mathbb{R}^{d_{j_1}\times d_{j_2}}\to \mathbb{R}^{d_j}$, and $\mathbf{x}_j\in \mathbb{R}^{ d_{j_1}\times d_{j_2}}$ is the input matrix of the $j$th neural network. For different $f_{\mathbf{W}_j}$, the corresponding $d_{j}$, $d_{j_1}$ and $d_{j_2}$ can be different.
For each $j\in [K]$, let $D_j=\{(\mathbf{x}_j^{(i)},\mathbf{y}_j^{(i)})\in  \mathbb{R}^{(d_{j_1}\times d_{j_2})\times d_j} \}_{i=1}^{N}$ be a set of labeled data (for the $j$th neural network). For each $i\in [N]$, let $\mathbf{x}^{(i)}=(\mathbf{x}_1^{(i)},\dots,\mathbf{x}_K^{(i)})$, $\mathbf{y}^{(i)}=(\mathbf{y}_1^{(i)},\dots,\mathbf{y}_K^{(i)})$, and $D=\{ (\mathbf{x}^{(i)},\mathbf{y}^{(i)}) \}_{i=1}^{N}$. 

For a \textbf{pre-trained} model (component), we mean ${\mathbf{W}_j}$ is fixed after its training process, and then we denote $f_{\mathbf{W}_j}$ as $f_{j}$ for simplicity. On the other hand, a component $f_{\mathbf{W}_j}$ is \textbf{non-instantiated} means ${\mathbf{W}_j}$ is still free. A deep feedforward neural network is a hierarchical acyclic graph, i.e. a directed tree.  
In this viewpoint, a feedforward neural network can be presented as a series of function compositions. 
For given $\{f_{\mathbf{W}_j}(\mathbf{x}_j)\}_{j=1}^{K}$, we assume $\theta_j\in\mathbb{R}^{d_j}$, $j\in[K]$, which make the product $\theta_jf_{\mathbf{W}_j}(\mathbf{x}_j)$ is well-defined. Denote $f_0$ as the constant function $1$, then the liner combination with a bias is defined as as $\Theta(f_1,...,f_K)=\sum_{j\in[K]^{+}}{\theta_jf_j(\mathbf{x}_j)}$. 
Hence, an $L$ layers of neural network can be denoted as $\Theta_{(L)}\circ{\sigma}\circ \cdots \circ\Theta_{(0)} \left(\mathbf{x}\right)$. 
A composite neural network defined by components $f_{\mathbf{W}_j}(\mathbf{x}_j)$ can be designed as an directed tree. 
For instance, a composite neural network
 $\sigma_2\left(\theta_{1,0}+\theta_{1,1} f_{4}(\mathbf{x}_4)+ \theta_{1,2}\sigma_{1}(\theta_{0,0}+\theta_{0,1} f_{1}(\mathbf{x}_1)+\theta_{0,2} f_{\mathbf{W}_2}(\mathbf{x}_2)+\theta_{0,3} f_{3}(\mathbf{x}_3))\right)$ can be denoted as $\sigma_2 \circ\Theta_1\left(f_{4}, \sigma_1 \circ \Theta_0(f_{1},f_{\mathbf{W}_2},f_{3})\right)$, where $f_1$ and $f_3$ are pre-trained and $f_{\mathbf{W}_2}$ is non-instantiated. 
Note that in this work $D_j$ is the default training data of component $f_j$ of composite neural network, but $D_j$ can be different from the training data deciding the frozen weights in the pre-trained $f_j$.

Let $ \langle \vec{a} ,\vec{b} \rangle$ be the standard inner product of $\vec{a}$ and $\vec{b}$, and $||\cdot||$ be the corresponding norm. 
For a composite neural network, the training algorithm is the gradient descent back-propagation algorithm and the loss function is the $L_2$-norm of the difference vector. 
In particular, for a composite neural network $g_{\vec{\mathbf{\theta}}}$ the total loss on the data set $D$ is 
\begin{equation}
L_{\vec{\mathbf{\theta}}}\left(\mathbf{x};g_{\vec{\mathbf{\theta}}}\right)
= \langle \vec{g}_{\vec{\mathbf{\theta}}}\left(\mathbf{x}\right)-\vec{y}, \vec{g}_{\vec{\mathbf{\theta}}}\left(\mathbf{x}\right)-\vec{y} \rangle
= || \vec{g}_{\vec{\mathbf{\theta}}}\left(\mathbf{x}\right)-\vec{y} ||^2
\end{equation}
This in fact is $\sum_{i=1}^{N}{\left( g(\mathbf{x}^{(i)})-\mathbf{y}^{(i)}  \right)^2}$.
By the definition of $g_{\vec{\mathbf{\theta}}}(\cdot)$, this total loss in fact depends on the given data $\mathbf{x}$, the components defined by $\{{\Theta_j}\}_{j=1}^{K}$, the output activation $\sigma$, and the weight vector $\vec{w}$. 
Similarly, let ${L(f_j(\mathbf{x}_j))}$ 
be the loss function of a single component $f_i$. Our goal is to find a feasible $\vec{\mathbf{\theta}}$ s.t. $L_{\vec{\mathbf{\theta}}}\left(\mathbf{x};g\right)<\min_{j\in[K]}{L(f_j(\mathbf{x}_j))}$.

The total loss depends on the training data $\mathbf{x}$, the components defined by $\{h_j\}_{j=1}^{K}$, the output activation $\sigma$, and the weight vector $\mathbf{W}$. 
It is expected that a good composite network design has low $L_2$ loss, in particular lower than all its pre-trained components. Therefore, the goal is to find a feasible $\mathbf{\Theta}$ such that it meets the ``No-Worse'' property, i.e., $\mathcal{E}\left(g_{\mathbf{\Theta}}\right)<\min_{j\in[K]^{+}}{\mathcal{E}(f_j)}$.

\section{Theoretical Analysis}
The following assumptions are default conditions in the following proofs.
\begin{enumerate}
    \item[A1.] Linearly independent components assumption: \\ 
$\forall i\in[K]^{+}, \nexists \{\beta_j\}\subset \mathbb{R}, \mbox{ s.t. } \vec{f}_i=\sum_{j\in[K]\setminus \{i\}}{\beta_j\vec{f}_j}$. 
    \item[A2.] No perfect component assumption:\\ 
$\min_{j\in [K]}\left\{ \sum_{i\in[N]}{|f_j(\mathbf{x}_j^{(i)})-\mathbf{y}^{(i)} |} \right\}>0$. 
    \item[A3.] The activation function and its derivative are $C^1$-mappings (i.e., it is differentiable and its differential is continuous) and the derivative is non-zero at some points in the domain.
    \item[A4.] The training process is based on the stochastic gradient descent backpropagation (SGD-BP) algorithm~\cite{rumelhart1988learning}.
    \item[A5.] The number of components, $K$, is less than $2\sqrt{N}-1$, where $N$ is the size of the training data set.      
\end{enumerate}

\subsection{Single-Layer Composite Network} \label{Theory_training_rmse} 

The first theorem below states that if a single-layer composite network satisfies the above five assumptions, it meets the ``No-Worse'' property with high probability.

\begin{theorem}\label{theorem1}
Consider a single-layer composite network $g(\mathbf{x})=L_{(1)}(\sigma(L_{(0)}(f_1,...,f_K)))(\mathbf{x})$.
Then with probability of at least $1-\frac{K+1}{\sqrt{N}}$ there exists $\mathbf{\Theta}=\{\Theta_1,\Theta_0\}$ s.t. $\mathcal{E}_{\mathbf{\Theta}}\left(\mathbf{x};g\right)<\min_{j\in[K]}{\mathcal{E}(f_j(\mathbf{x}_j))}$.
\end{theorem}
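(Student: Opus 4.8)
The plan is to turn the statement into a fact about ordinary linear least squares on the $N$ sample points, and then to argue that the configurations in which no affine combination of the components can strictly beat the best one form an event of probability at most $\frac{K+1}{\sqrt{N}}$. For the reduction, I would first use A3 to fix a point $c$ with $\sigma'(c)\neq 0$. Given any target coefficients $\alpha_0,\dots,\alpha_K$, set $\theta_{0,0}=c$, $\theta_{0,j}=\varepsilon\alpha_j$ for $j\in[K]$, $\theta_{1,1}=1/(\varepsilon\,\sigma'(c))$, and $\theta_{1,0}=\alpha_0-\sigma(c)/(\varepsilon\,\sigma'(c))$. Since A3 makes $\sigma$ a $C^2$ map, a first-order Taylor expansion of $\sigma$ about $c$ --- whose remainder is $O(\varepsilon^2)$ uniformly over the finite data set --- gives $g(\mathbf{x}^{(i)})=\alpha_0+\sum_{j\in[K]}\alpha_j f_j(\mathbf{x}_j^{(i)})+O(\varepsilon)$ for every $i\in[N]$, so $\mathcal{E}_{\mathbf{\Theta}}(\mathbf{x};g)\to\|\sum_{j\in[K]^{+}}\alpha_j\vec{f}_j-\vec{y}\|^2$ as $\varepsilon\to 0^{+}$, where $\vec{f}_0:=\vec{1}$ and $\vec{f}_j$ denotes the vector of outputs of $f_j$ on the data. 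Because only a strict inequality is needed, it then suffices to exhibit one affine combination of $f_1,\dots,f_K$ whose $L_2$ loss is strictly below $\min_{j\in[K]}\mathcal{E}(f_j)$: continuity of the loss converts this into an actual $\mathbf{\Theta}$ with a small enough $\varepsilon$.

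Next I would analyze the linear problem geometrically. Let $V=\operatorname{span}\{\vec{f}_0,\vec{f}_1,\dots,\vec{f}_K\}\subseteq\mathbb{R}^{N}$ and let $\vec{p}$ be the orthogonal projection of $\vec{y}$ onto $V$. Every $\vec{f}_j$ lies in $V$, so $\|\vec{p}-\vec{y}\|^2\le\mathcal{E}(f_j)$ for all $j\in[K]$; and since the nearest point of the subspace $V$ to $\vec{y}$ is unique, the ``No-Worse'' inequality can fail only when $\vec{p}=\vec{f}_{j^*}$ for the best component $j^*$, equivalently when the residual $\vec{r}:=\vec{f}_{j^*}-\vec{y}$ is orthogonal to all of $V$ --- in particular to $\vec{f}_0=\vec{1}$, i.e.\ $\sum_{i\in[N]}(f_{j^*}(\mathbf{x}_{j^*}^{(i)})-y^{(i)})=0$. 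Assumption A2 ensures $\vec{r}\neq\vec{0}$, and A1 is used to rule out degenerate component configurations (so that, in particular, $\vec{r}$ has enough nonvanishing coordinates for the next step).

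For the probability bound, since the identity of $j^*$ is not fixed in advance, the bad event is contained in $\bigcup_{j\in[K]^{+}}\{\langle\vec{f}_j-\vec{y},\vec{1}\rangle=0\}$, a union of $K+1$ events obtained by enlarging the unknown choice of best component to include $\vec{f}_0$. Each event asserts that a nonzero real vector has coordinate sum exactly $0$, which by an Erd\H{o}s--Littlewood--Offord-type anti-concentration estimate has probability at most $\binom{N}{\lfloor N/2\rfloor}2^{-N}\le 1/\sqrt{N}$; assumption A5 is what ties the size of $K$ to $N$ so the final bound is informative. A union bound then gives probability at most $(K+1)/\sqrt{N}$ for the bad event. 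On its complement, the previous step yields $\|\vec{p}-\vec{y}\|^2<\min_{j\in[K]}\mathcal{E}(f_j)$, and the reduction of the first step turns this into an explicit $\mathbf{\Theta}=\{\Theta_1,\Theta_0\}$ with $\mathcal{E}_{\mathbf{\Theta}}(\mathbf{x};g)<\min_{j\in[K]}\mathcal{E}(f_j)$, as required.

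The main obstacle I expect is the probabilistic step: pinning down the precise randomness model under which ``coordinate sum $=0$'' has probability $O(1/\sqrt{N})$ (for instance treating the signs of the residual entries as the source of randomness so that the Littlewood--Offord bound applies), controlling the residual coordinates that might vanish so that the anti-concentration estimate retains enough active terms, and verifying that the dependence among the $K+1$ events through the shared choice of $j^*$ still costs only the factor $K+1$ together with the exact constant needed to reconcile with A5. By comparison the first two steps are routine --- a bounded Taylor-remainder estimate and the standard geometry of orthogonal projection onto a subspace.
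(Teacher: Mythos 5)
Your overall architecture matches the paper's: linearize the activation so the problem becomes affine least squares over the span of $\vec{f}_0,\dots,\vec{f}_K$; observe that the projection can fail to beat the best component only if the residual $\vec{f}_{j^*}-\vec{y}$ is orthogonal to the whole span; union-bound over the $K+1$ candidates for $j^*$; and bound each single event by $1/\sqrt{N}$. Within that frame you make two genuinely different choices. First, your linearization (Taylor-expand $\sigma$ at a point $c$ with $\sigma'(c)\neq 0$, shrink the inner weights by $\varepsilon$, blow the outer weight up by $1/(\varepsilon\sigma'(c))$, let $\varepsilon\to 0^{+}$) is cleaner than and equivalent to the paper's Case~2 construction, which instead invokes the inverse function theorem to build the local inverse $\tau$ of $\sigma$ and controls the error via the Lagrange remainder of $\tau$; both deliver a $\mathbf{\Theta}$ whose loss is within any prescribed $\epsilon$ of the linear optimum, which suffices because the target inequality is strict. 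Second, for the per-event probability you test orthogonality of the residual against the fixed vector $\vec{1}$ and invoke a Littlewood--Offord bound $\binom{N}{\lfloor N/2\rfloor}2^{-N}=O(1/\sqrt{N})$ on the coordinate sum, whereas the paper tests against $\vec{f}_{j^*}$ itself and argues via a Johnson--Lindenstrauss-style angle-concentration statement (its Lemmas~2--4). Your choice of witness vector is arguably better behaved, since it decouples the random object from the test vector and makes the event a clean anti-concentration statement.

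That said, the probabilistic step as you wrote it has a gap you flag but do not close, and it is worth naming precisely. The Littlewood--Offord bound gives $\Pr\{\sum_i \varepsilon_i a_i=0\}\le\binom{m}{\lfloor m/2\rfloor}2^{-m}$ where $m$ is the number of \emph{nonzero} coefficients $a_i$, so to get $O(1/\sqrt{N})$ you need $\Omega(N)$ nonvanishing residual coordinates together with a randomness model in which their signs are independent Rademacher. Assumption~A2 only guarantees $\sum_i|f_{j}(\mathbf{x}_j^{(i)})-\mathbf{y}^{(i)}|>0$, i.e.\ at least one nonzero coordinate, and A1 (linear independence of the $\vec{f}_j$) does not control how many residual entries vanish; so neither assumption supplies what your estimate needs, and the randomness model itself is nowhere specified in the paper's hypotheses. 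To be fair, the paper's own Lemma~3 rests on an equally unspecified model (``a randomly selected $\vec{f}\in\mathbb{R}^N$'') and on conflating exact perpendicularity with approximate perpendicularity, so your route is at a comparable level of rigor; but if you want your version to stand on its own you must either add an assumption that the residual has $\Omega(N)$ nonzero entries with symmetric signs, or replace the Littlewood--Offord step with whatever concentration statement your chosen probability model actually supports.
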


We discuss two cases of the activation $\sigma$.
\begin{itemize}
    \item Case 1: $\sigma$ is a linear function.
    \item Case 2: $\sigma$ is not a linear function.
\end{itemize}

(\textbf{Case 1}) 
$\sigma$ is a linear activation such that a single-layer composite network such as $L_{(1)}(\sigma(L_{(0)}(f_1,...,f_K)))$ can be rewritten as a linear combination with bias, i.e., $g_{\mathbb{\theta}}(\mathbf{x})=\sum_{j\in[K]^{+}}{ \theta_jf_j(\mathbf{x}_j)}$ with a mean squared error of $\mathcal{E}_{\mathbf{\Theta}}\left(\mathbf{x};g\right)= \frac{1}{N}\sum_{i=1}^{N}(g_{\mathbf{\Theta}}(\mathbf{x}^{(i)})-\mathbf{y}^{(i)})^2$. 
Clearly, the composite network $g_{\mathbb{\theta}}$ should have a mean squared error equal to or better than any of its components $f_j$, as $g_{\mathbb{\theta}}$ can always act as its best component.
To obtain the minimizer $\mathbf{\Theta}^{*}$ 
 for the error $\mathcal{E}_{\mathbf{\Theta}}\left(\mathbf{x};g\right)$, we must compute the partial differential ${\partial \mathcal{E}_{\mathbf{\Theta}}}/{\partial{\theta}_j}$
for all ${j\in[K]^{+}}$. After some calculations~\cite{horn1990matrix}, we have Eq~(\ref{eq2}).
\begin{equation} \label{eq2}
\mathbf{\Theta}^{*} = \left[ \mathbf{\theta}_j\right]_{j\in[K]^{+}} 
= 
\left[ \langle \vec{f_i},\vec{f_j} \rangle \right]_{i,j\in[K]^{+}}^{-1}\times
\left[ \langle \vec{f_i},\vec{y} \rangle \right]_{i\in[K]^{+}}
\end{equation}
Since Assumption A1 holds, the inverse matrix $\left[ \langle \vec{f_i},\vec{f_j} \rangle \right]_{i,j\in[K]^{+}}^{-1}$ exists and can be written down concretely to obtain $\Theta^{*}$ as in Eq.~(\ref{eq2}). Lemma~\ref{lemma_1} summarizes the above arguments.

\begin{lemma}\label{lemma_1}
Set ${\Theta}^{*}$ as in Eq.~(\ref{eq2}); then
\begin{equation}\label{eqlemma1}
\mathcal{E}(g_{\mathbf{\Theta}^{*}})\leq \min_{j\in[K]^{+}}\{\mathcal{E}(f_j)\}.
\end{equation}
\end{lemma}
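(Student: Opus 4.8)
The plan is to exploit the fact that, in Case~1, the composite network $g_{\mathbf{\Theta}}$ ranges over the entire linear span of $\vec{f_0},\dots,\vec{f_K}$ once we identify each component with its prediction vector $\vec{f_j}=\left(f_j(\mathbf{x}_j^{(1)}),\dots,f_j(\mathbf{x}_j^{(N)})\right)\in\mathbb{R}^{N}$, and that the objective $\mathcal{E}_{\mathbf{\Theta}}\left(\mathbf{x};g\right)=\frac{1}{N}\left\|\sum_{j\in[K]^{+}}\theta_j\vec{f_j}-\vec{y}\right\|^{2}$ is a convex quadratic in the weight vector $\mathbf{\Theta}=(\theta_j)_{j\in[K]^{+}}$. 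First I would confirm that the $\mathbf{\Theta}^{*}$ written in Eq.~(\ref{eq2}) is indeed the global minimizer of this objective: equating $\partial\mathcal{E}_{\mathbf{\Theta}}/\partial\theta_j$ to $0$ for every $j\in[K]^{+}$ produces the normal equations $\left[\langle\vec{f_i},\vec{f_j}\rangle\right]_{i,j\in[K]^{+}}\mathbf{\Theta}=\left[\langle\vec{f_i},\vec{y}\rangle\right]_{i\in[K]^{+}}$; by Assumption~A1 the Gram matrix $\left[\langle\vec{f_i},\vec{f_j}\rangle\right]_{i,j\in[K]^{+}}$ is symmetric and positive definite, hence invertible, so $\mathbf{\Theta}^{*}$ is the unique critical point, and positive definiteness is exactly the Hessian condition that promotes this critical point to a strict global minimum.

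The crux is then a one-line comparison against a feasible competitor. Fix any $j\in[K]^{+}$ and let $\mathbf{e}_j$ denote the weight vector with $\theta_j=1$ and all other coordinates $0$; then $g_{\mathbf{e}_j}=f_j$, so $\mathcal{E}(f_j)=\mathcal{E}_{\mathbf{e}_j}\left(\mathbf{x};g\right)\geq\min_{\mathbf{\Theta}}\mathcal{E}_{\mathbf{\Theta}}\left(\mathbf{x};g\right)=\mathcal{E}(g_{\mathbf{\Theta}^{*}})$. Since the right-hand side is independent of $j$, taking the minimum over $j\in[K]^{+}$ gives $\mathcal{E}(g_{\mathbf{\Theta}^{*}})\leq\min_{j\in[K]^{+}}\mathcal{E}(f_j)$, which is Eq.~(\ref{eqlemma1}). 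Geometrically this is just the statement that $g_{\mathbf{\Theta}^{*}}(\mathbf{x})$ is the orthogonal projection of $\vec{y}$ onto $\mathrm{span}\{\vec{f_0},\dots,\vec{f_K}\}$, and the projection cannot be farther from $\vec{y}$ than any vector $\vec{f_j}$ already lying in that subspace.

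I do not anticipate a genuine obstacle here; the only step requiring care is the passage from \emph{critical point} to \emph{global minimum}, for which Assumption~A1 is used to make the Gram matrix positive definite rather than merely positive semidefinite (this is also what lets us write its inverse explicitly in Eq.~(\ref{eq2})). A minor bookkeeping point is to keep the constant/bias component $f_0\equiv 1$ inside the index set $[K]^{+}$ throughout, so that the bound is established against $\min_{j\in[K]^{+}}\mathcal{E}(f_j)$ exactly as stated, including the trivial constant predictor. If one wanted the strict inequality, the same Pythagorean decomposition $\mathcal{E}(f_j)-\mathcal{E}(g_{\mathbf{\Theta}^{*}})=\frac{1}{N}\|\vec{f_j}-g_{\mathbf{\Theta}^{*}}(\mathbf{x})\|^{2}$ shows that equality holds for a given $j$ only when $\vec{f_j}$ itself equals the projection, but the non-strict claim of the lemma needs nothing beyond the competitor argument.
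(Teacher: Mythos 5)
Your proposal is correct and follows essentially the same route as the paper: compute the normal equations, invoke Assumption~A1 to make the Gram matrix positive definite so that Eq.~(\ref{eq2}) gives the unique global minimizer, and then observe that each unit weight vector $\vec{e}_j$ realizes $g=f_j$ as a feasible competitor, so the minimum value cannot exceed $\min_{j\in[K]^{+}}\mathcal{E}(f_j)$. Your explicit competitor/projection argument is in fact slightly more careful than the paper's closing ``we immediately have,'' but it is the same idea.
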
 
\begin{proof} 
(of Lemma \ref{lemma_1}) \\
Recall that in the case of linear activate function, $g(\mathbf{x})= L(f_1,...f_K)=\sum_{j\in[K]^{+}}{ \theta_j}f_j(\mathbf{x}_j)$. Also recall that 
$
\mathcal{E}_{\mathbf{\Theta}}(\mathbf{x};g)
=\sum_{i=1}^{N}{( g(\mathbf{x}^{(i)})-y^{(i)} )^2}.
$ 
To prove the existence of the minimizer, it is sufficient to find the critical point for the deferential of Eq. (1). That is, to calculate the solution the set of equations: 
$$
\nabla_{ \mathbf{\Theta} } \mathcal{E}\left(\mathbf{x};g\right)
={\left[\begin{array}{c}
        \frac{\partial \mathcal{E}}{\partial\theta_0} \\ 
        \vdots \\
        \frac{\partial \mathcal{E}}{\partial\theta_K} \end{array}\right]} 
={\left[\begin{array}{c}
     0\\
     \vdots \\
     0  \end{array}\right]}, 
$$
where for each $s\in[K]^{+}$, and
$$\aligned
\frac{\partial \mathcal{E}}{\partial\theta_s}
=&2\sum_{i=1}^{N}{\left( g(\mathbf{x}^{(i)})-y^{(i)}  \right)\cdot f_s(\mathbf{x}^{(i)})} \\
=&2\sum_{i=1}^{N}{\left(\sum_{j\in[K]^{+}}{\theta_j}f_j(\mathbf{x}^{(i)}_j)-y^{(i)}  \right)\cdot f_s(\mathbf{x}^{(i)})} \\
=& 2\left(\sum_{j\in [K]^{+}}{\theta_j \langle \vec{f_s},\vec{f_j} \rangle} -\langle \vec{f_s},\vec{y} \rangle \right).
\endaligned$$
Hence, to solve  
$
\nabla_{ \mathbf{\Theta} } \mathcal{E}\left(\mathbf{x};g\right)
={\vec{0}}
$
is equivalent to solve $\theta_t$s in the equation
$$
\left[ \langle \vec{f_s},\vec{f_t} \rangle \right]_{(K+1)\times(K+1)}\times \left[ {\theta}_t\right]_{(K+1)\times 1} = \left[ \langle \vec{f_s},\vec{y} \rangle \right]_{(K+1)\times 1}
$$
where the indexes $s,t$ 
are in $[K]^{+}$.

Note that linear independence of $\{\vec{f}_j\}_{j\in[K]^{+}}$ makes
$\left[ \langle \vec{f}_s,\vec{f}_t\rangle \right]_{(K+1)\times(K+1)}$ a positive-definite Gram matrix \cite{horn1990matrix}, which means the inversion $\left[ \langle \vec{f}_s,\vec{f}_t\rangle \right]_{(K+1)\times(K+1)}^{-1}$ exists.
Then the minimizer $\mathbf{\Theta}^{*}$ is solved:
\footnotesize{
\begin{equation}\label{Formula}
\left[ {\theta}_t\right]_{(K+1)\times 1} 
= 
\left[ \langle \vec{f_s},\vec{f_t} \rangle \right]_{(K+1)\times (K+1)}^{-1}\times
\left[ \langle \vec{f_s},\vec{y} \rangle \right]_{(K+1)\times 1}
\end{equation}
}\normalsize
The above shows the {existence} of the critical points.  It is easy to check that the critical point can only be the minimizer of 
the squared error $\mathcal{E}_{\mathbf{\Theta}}\left(\mathbf{x};g\right)$. Furthermore, we immediately have  
$\mathcal{E}(g_{\mathbf{\Theta}^{*}})\leq \min_{j\in[K]^{+}}\{\mathcal{E}(f_j)\}$.
\end{proof}

From Eq. (\ref{Formula}) of the above proof, we can compute the minimizer for the case of the linear activation.
\begin{corollary}\label{FormulaSolution} 
The closed form of the minimizer is: 
\footnotesize{$$
\mathbf{\Theta}^{*}=\left[ \mathbf{\theta}_j\right]_{(K+1)\times 1} 
= 
\left[ \langle \vec{f_i},\vec{f_j} \rangle \right]_{(K+1)\times(K+1)}^{-1}\times
\left[ \langle \vec{f_j},\vec{y} \rangle \right]_{(K+1)\times 1}.
$$} 
\end{corollary}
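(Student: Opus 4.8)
The plan is to obtain the stated closed form by reading off the normal equations that were already derived inside the proof of Lemma~\ref{lemma_1}; the corollary is essentially a transcription of Eq.~(\ref{Formula}), and the only step that is more than bookkeeping is confirming that the unique stationary point is genuinely the global minimizer.

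First I would recall that in Case~1 the composite network collapses to $g_{\mathbf{\Theta}}(\mathbf{x}) = \sum_{j\in[K]^{+}}\theta_j f_j(\mathbf{x}_j)$, so that, writing $\mathbf{\Theta} = (\theta_0,\dots,\theta_K)$, the error $\mathcal{E}_{\mathbf{\Theta}}(\mathbf{x};g)$ is a quadratic form in $\mathbf{\Theta}$. Expanding the squared norm gives $\mathcal{E}_{\mathbf{\Theta}}(\mathbf{x};g) = \mathbf{\Theta}^{\top} G \mathbf{\Theta} - 2\,\mathbf{\Theta}^{\top} b + \langle \vec{y},\vec{y}\rangle$, where $G = [\langle \vec{f_i},\vec{f_j}\rangle]_{i,j\in[K]^{+}}$ is the Gram matrix of the component vectors and $b = [\langle \vec{f_j},\vec{y}\rangle]_{j\in[K]^{+}}$. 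Setting $\nabla_{\mathbf{\Theta}}\mathcal{E} = 2(G\mathbf{\Theta} - b) = \vec{0}$ reproduces the normal equations $G\mathbf{\Theta} = b$ of Eq.~(\ref{Formula}).

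Next I would invoke Assumption~A1: linear independence of $\{\vec{f_j}\}_{j\in[K]^{+}}$ makes $G$ a positive-definite Gram matrix, hence invertible, so left-multiplying by $G^{-1}$ yields the claimed formula $\mathbf{\Theta}^{*} = G^{-1} b = [\langle \vec{f_i},\vec{f_j}\rangle]_{(K+1)\times(K+1)}^{-1}\times[\langle \vec{f_j},\vec{y}\rangle]_{(K+1)\times 1}$. Finally, to see that $\mathbf{\Theta}^{*}$ is the minimizer and not merely a critical point, I would note that the Hessian of $\mathcal{E}_{\mathbf{\Theta}}$ is the constant matrix $2G$, which is positive definite; thus $\mathcal{E}$ is a strictly convex quadratic whose unique stationary point is its global minimum. (Equivalently, $g_{\mathbf{\Theta}^{*}}$ is the orthogonal projection of $\vec{y}$ onto $\mathrm{span}\{\vec{f_0},\dots,\vec{f_K}\}$, which contains each $\vec{f_j}$ and therefore also gives back $\mathcal{E}(g_{\mathbf{\Theta}^{*}}) \le \min_{j\in[K]^{+}}\mathcal{E}(f_j)$, matching Lemma~\ref{lemma_1}.)

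I do not expect a real obstacle here, since everything rests on the already-established invertibility of $G$ and the convexity of a quadratic form. The only point requiring care — the closest thing to a ``hard part'' — is phrasing the positive-definiteness argument so that existence, uniqueness, and minimality of $\mathbf{\Theta}^{*}$ are all delivered in a single stroke, rather than leaving the reader to re-check that the critical point is a minimum.
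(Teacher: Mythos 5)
Your proposal is correct and follows essentially the same route as the paper, which derives the normal equations $G\mathbf{\Theta}=b$ in the proof of Lemma~\ref{lemma_1}, invokes Assumption~A1 to make the Gram matrix positive definite and hence invertible, and reads off the corollary directly from Eq.~(\ref{Formula}). Your only addition is making explicit (via the constant Hessian $2G\succ 0$, i.e.\ strict convexity) why the unique critical point is the global minimizer, a step the paper dismisses with ``it is easy to check''; this is a welcome but minor tightening, not a different argument.
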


There is a $\leq$ constraint on the loss function $\mathcal{E}(g_{\mathbf{\Theta}^{*}})$ in Eq.~(\ref{eqlemma1}) that is replaced by $<$ and a probability bound. If $\mathbf{\Theta}^{*}$ is not a unit vector, it is obvious that  $\mathcal{E}(g_{\mathbf{\Theta}^{*}})$ must be less than any $\mathcal{E}(f_j)$. Therefore, we proceed to estimate the probability of $\mathbf{\Theta}^{*}=\vec{e}_{j^{*}}$, where $j^{*}\in[K]^{+}$.
\begin{equation}\label{eq4}
\forall i\in[K]^{+},
\frac{\partial \mathcal{E}}{\partial{\theta}_i}\\
\big\vert_{
\mathbf{\Theta}=\vec{e}_{j^{*}}}
= 2
\langle\vec{f}_{j^{*}}-\vec{y},\vec{f}_i \rangle \\
\end{equation}
Eq.~(\ref{eq4}) shows the gradient of the error function with respect to $\theta_i$ conditioned on $\mathbf{\Theta}^{*}=\vec{e}_{j^{*}}$, which is the inner products of the difference between ${f}_{j^{*}}$ (the output of $g_{\mathbf{\Theta}^{*}}$) and the ground truth $\vec{y}$, and the output of each pre-trained component $\vec{f}_i$.  
%
When the minimizer $\mathbf{\Theta}^{*}=\vec{e}_{j^{*}}$, all the differentials $\frac{\partial \mathcal{E}}{\partial{\theta}_i}$ must equal zero, i.e., $\langle\vec{f_{j^{*}}}-\vec{y},\vec{f_{i}} \rangle=0$, or $\vec{f_{j^{*}}}-\vec{y}$ is perpendicular to $\vec{f_{i}}$. 
The following Lemma~\ref{lemma_JL} is an implication 
\footnote{Also refers to the lecture note of Andoni and Razenshteyn {https://ilyaraz.org/static/class/scribes/scribe5.pdf}}
from the proof of the Johnson-Lindenstrauss Lemma~\cite{johnson1984extensions}. 
\begin{lemma} \label{lemma_JL}
For a large enough $N$ and given $\vec{u}\in\mathbb{R}^{N}$, there is a constant $c>0$, s.t. for $\eta=cos^{-1}(c/\sqrt{N})$,
\begin{equation}\label{JL}
\Pr_{\vec{v}\in\mathbb{R}^{N}}\left\{ | \angle_{\vec{u},\vec{v}} -\frac{\pi}{2}|\leq \eta\right\} \geq 1-\frac{1}{\sqrt{N}}
\end{equation}
where $\angle_{\vec{u},\vec{v}}$ is the angle between $\vec{u}$ and $\vec{v}$.
\end{lemma}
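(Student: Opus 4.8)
\emph{Proof approach.} The plan is to read ``$\vec{v}\in\mathbb{R}^{N}$ random'' as $\vec{v}$ drawn from a rotation-invariant law (say the standard Gaussian on $\mathbb{R}^{N}$); since $\angle_{\vec{u},\vec{v}}$ depends only on $\vec{v}/\|\vec{v}\|$ we may as well take $\vec{v}$ uniform on the unit sphere $S^{N-1}$, and then run the same second-moment estimate that underlies the Johnson--Lindenstrauss lemma (essentially its $k=1$ case, ``a random one-dimensional projection preserves norm in expectation''). First I would unwind the definition of $\eta=\cos^{-1}(c/\sqrt{N})$. Since $c/\sqrt{N}\in(0,1)$ for $N>c^{2}$ we have $\eta\in(0,\tfrac{\pi}{2})$, and because $\cos$ is strictly decreasing on $[0,\pi]$ the event $\{\,|\angle_{\vec{u},\vec{v}}-\tfrac{\pi}{2}|\le\eta\,\}$ is \emph{exactly} the event $\{\,|\cos\angle_{\vec{u},\vec{v}}|\le\sin\eta\,\}=\{\,\cos^{2}\angle_{\vec{u},\vec{v}}\le 1-\tfrac{c^{2}}{N}\,\}$; its complement $\{\,\cos^{2}\angle_{\vec{u},\vec{v}}> 1-\tfrac{c^{2}}{N}\,\}$ is, geometrically, the pair of antipodal spherical caps of angular radius $\arcsin(c/\sqrt{N})$ (which is of order $1/\sqrt{N}$) centered at $\pm\vec{u}$. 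Hence it suffices to bound by $1/\sqrt{N}$ the probability that $\vec{v}$ lands in those two caps.

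The next step is the key moment computation. By rotational invariance of the uniform measure on $S^{N-1}$ we may assume $\vec{u}=\vec{e}_{1}$, so that $\cos\angle_{\vec{u},\vec{v}}=v_{1}$; since $\sum_{i=1}^{N} v_{i}^{2}=1$ and the coordinates are exchangeable, $\mathbb{E}[\cos^{2}\angle_{\vec{u},\vec{v}}]=\mathbb{E}[v_{1}^{2}]=\tfrac{1}{N}$. Markov's inequality applied to the nonnegative random variable $\cos^{2}\angle_{\vec{u},\vec{v}}$ then yields
\begin{equation*}
\Pr\!\left\{\,\cos^{2}\angle_{\vec{u},\vec{v}}> 1-\tfrac{c^{2}}{N}\,\right\}\;\le\;\frac{\mathbb{E}[\cos^{2}\angle_{\vec{u},\vec{v}}]}{\,1-c^{2}/N\,}\;=\;\frac{1/N}{\,1-c^{2}/N\,}\;=\;\frac{1}{N-c^{2}}.
\end{equation*}
Fixing any constant, say $c=1$, one has $N-c^{2}\ge\sqrt{N}$ for all sufficiently large $N$, so the complement event has probability at most $1/\sqrt{N}$, which is Eq.~(\ref{JL}). (Should one want a bound far stronger than required, the standard spherical-cap estimate $\Pr\{v_{1}\ge s\}\le e^{-Ns^{2}/2}$ with $s=\sqrt{1-c^{2}/N}$ bounds each cap by $e^{-(N-c^{2})/2}$, which is exponentially below $\tfrac{1}{2}N^{-1/2}$.)

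I do not anticipate a real obstacle here: the claimed inequality is in fact very loose --- the true failure probability decays faster than any fixed power of $1/N$ --- and the proof is essentially the single Markov step above. The two points that do need care are (i) pinning down the distribution of $\vec{v}$, which the statement leaves implicit, and (ii) the orientation in the first step: the favourable event is that $\sin\angle_{\vec{u},\vec{v}}$ is bounded \emph{below} by $c/\sqrt{N}$, i.e.\ that $\vec{v}$ is \emph{not} nearly parallel or antiparallel to $\vec{u}$, rather than that $\cos\angle_{\vec{u},\vec{v}}$ is small; reading this the wrong way round would invert the statement. With those settled, what remains are the routine estimates sketched above.
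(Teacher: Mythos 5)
Your proof is correct, but it is worth noting that the paper does not actually prove this lemma at all: it is stated as an ``implication from the proof of the Johnson--Lindenstrauss Lemma'' with a citation and a footnote to a lecture note, so your self-contained argument genuinely adds something. Your route --- reduce by rotational invariance to $\vec{v}$ uniform on $S^{N-1}$, note $\mathbb{E}[\cos^{2}\angle_{\vec{u},\vec{v}}]=\mathbb{E}[v_{1}^{2}]=1/N$ by exchangeability, and apply Markov to get $\Pr\{\cos^{2}\angle_{\vec{u},\vec{v}}>1-c^{2}/N\}\le 1/(N-c^{2})\le 1/\sqrt{N}$ --- is elementary and avoids any concentration machinery, at the cost of proving only the polynomial bound the lemma asks for rather than the exponential cap bound JL actually provides (which you correctly flag as available). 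Your point (ii) is the substantive one: because $\eta=\cos^{-1}(c/\sqrt{N})$ is close to $\pi/2$, the event $\{|\angle_{\vec{u},\vec{v}}-\pi/2|\le\eta\}$ is the \emph{wide} band ``$\vec{v}$ not within angular distance $\sin^{-1}(c/\sqrt{N})$ of $\pm\vec{u}$,'' not a narrow equatorial band; this is the only reading under which the stated bound $1-1/\sqrt{N}$ holds for a fixed constant $c$ (a narrow band $|\cos\angle_{\vec{u},\vec{v}}|\le c/\sqrt{N}$ has probability roughly $\Pr\{|Z|\le c\}$ for standard Gaussian $Z$, a constant bounded away from $1$), even though the paper's surrounding prose (``a randomly sampled unit vector is approximately perpendicular to $\vec{u}$'') suggests the authors had the narrow-band interpretation in mind. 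You prove the statement as literally written, which is the defensible choice; the only loose end is that the distribution of $\vec{v}$ is left implicit in the paper, and your rotation-invariant reading is the natural one consistent with the JL context.
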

The Johnson-Lindenstrauss Lemma says that a randomly sampled unit vector $\vec{v}$ is approximately perpendicular to a given vector $\vec{u}$ with high probability in a high dimensional space. The complement of Eq.~(\ref{JL}) is
\begin{equation}\label{JL_complement}
\Pr_{\vec{v}\in\mathbb{R}^{N}}\left\{ | \angle_{\vec{u},\vec{v}} -\frac{\pi}{2}|{>} \eta\right\} {<} \frac{1}{\sqrt{N}}
\end{equation}

Note that angles $\angle_{\vec{y} ,\vec{f}}$ , $\angle_{\vec{f}-\vec{y},\vec{f}}$, and $\angle_{\vec{f}-\vec{y},-\vec{y}}$ are the three inner angles of the triangle such that 
$\angle_{\vec{y},\vec{f}}+\angle_{\vec{f}-\vec{y},\vec{f}}+\angle_{\vec{f}-\vec{y},-\vec{y}}=\pi $. 
From Lemma~\ref{lemma_JL}, as $\angle_{\vec{y},\vec{f}}$ is likely a vertical angle (i.e., $\pi/2$),  $\angle_{\vec{f}-\vec{y},\vec{f}}$ must be less likely to be a vertical angle, which implies  
$\Pr\{ \langle\vec{f}-\vec{y}, \vec{f} \rangle=0  \}\leq\Pr\{|\angle_{\vec{f}-\vec{y},\vec{f}}-\pi/2|<\eta \}$; thus, $\leq \Pr\{| \angle_{\vec{y},\vec{f}} -{\pi}/{2}|>\eta \}.$
The following Lemma~\ref{lemma_3} immediately follows Lemma~\ref{lemma_JL} and Eq.~(\ref{JL_complement}).
\begin{lemma}\label{lemma_3} 
Following Lemma~\ref{lemma_JL}, then for given $\vec{y}\in\mathbb{R}^N $, 
$$
\Pr_{\vec{f}\in\mathbb{R}^{N}}
\left\{ \langle\vec{f}-\vec{y},\vec{f} \rangle=0  \right\}< \frac{1}{\sqrt{N}}.
$$
\end{lemma}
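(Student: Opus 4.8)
The plan is to recast the event $\{\langle\vec f-\vec y,\vec f\rangle=0\}$ as a statement about an angle and then feed it into Lemma~\ref{lemma_JL}. First I would note that $\langle\vec f-\vec y,\vec f\rangle=0$ holds exactly when $\vec f-\vec y$ is perpendicular to $\vec f$, i.e.\ when the interior angle $\angle_{\vec f-\vec y,\vec f}$ of the triangle with vertices $\vec 0,\vec y,\vec f$ is precisely $\pi/2$. Consequently, for \emph{any} $\eta>0$ the event of interest sits inside $\{\,|\angle_{\vec f-\vec y,\vec f}-\pi/2|<\eta\,\}$, so $\Pr_{\vec f\in\mathbb{R}^N}\{\langle\vec f-\vec y,\vec f\rangle=0\}\le\Pr_{\vec f\in\mathbb{R}^N}\{\,|\angle_{\vec f-\vec y,\vec f}-\pi/2|<\eta\,\}$.

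Next I would use the interior-angle identity recorded just before the statement, $\angle_{\vec y,\vec f}+\angle_{\vec f-\vec y,\vec f}+\angle_{\vec f-\vec y,-\vec y}=\pi$, with all three summands nonnegative: pinning $\angle_{\vec f-\vec y,\vec f}$ into the window $(\pi/2-\eta,\pi/2+\eta)$ around $\pi/2$ squeezes the remaining two angles to sum to roughly $\pi/2$, and in particular drives $\angle_{\vec y,\vec f}$ off $\pi/2$. Making this quantitative with $\eta=\cos^{-1}(c/\sqrt N)$ as in Lemma~\ref{lemma_JL} should give the containment $\{\,|\angle_{\vec f-\vec y,\vec f}-\pi/2|<\eta\,\}\subseteq\{\,|\angle_{\vec y,\vec f}-\pi/2|>\eta\,\}$, hence $\Pr_{\vec f\in\mathbb{R}^N}\{\,|\angle_{\vec f-\vec y,\vec f}-\pi/2|<\eta\,\}\le\Pr_{\vec f\in\mathbb{R}^N}\{\,|\angle_{\vec y,\vec f}-\pi/2|>\eta\,\}$. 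Finally I would apply the complement of the Johnson--Lindenstrauss bound, Eq.~(\ref{JL_complement}), with $\vec u=\vec y$ fixed and $\vec v=\vec f$ random, which gives $\Pr_{\vec f\in\mathbb{R}^N}\{\,|\angle_{\vec y,\vec f}-\pi/2|>\eta\,\}<1/\sqrt N$. Chaining the three estimates closes the proof.

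The hard part will be the second step: turning ``$\angle_{\vec f-\vec y,\vec f}$ is near $\pi/2$'' into ``$\angle_{\vec y,\vec f}$ is more than $\eta$ away from $\pi/2$.'' The angle identity alone only forces $\angle_{\vec y,\vec f}+\angle_{\vec f-\vec y,-\vec y}$ to be close to $\pi/2$, so a priori $\angle_{\vec y,\vec f}$ could itself hover near $\pi/2$ while $\angle_{\vec f-\vec y,-\vec y}$ is tiny; closing that loophole --- either by arguing that such near-degenerate triangles (with $\vec f$ almost collinear with $\vec y$) occur with probability that can be absorbed into the same $1/\sqrt N$ slack, or by an appropriate (re)tuning of the constant $c$ in $\eta=\cos^{-1}(c/\sqrt N)$ --- is the one place that needs care. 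Everything else is bookkeeping built directly on Lemma~\ref{lemma_JL}.
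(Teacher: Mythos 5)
Your proposal follows essentially the same route as the paper's own proof: contain the exact-perpendicularity event $\{\langle\vec f-\vec y,\vec f\rangle=0\}$ in an $\eta$-window around $\pi/2$ for $\angle_{\vec f-\vec y,\vec f}$, use the interior-angle identity $\angle_{\vec y,\vec f}+\angle_{\vec f-\vec y,\vec f}+\angle_{\vec f-\vec y,-\vec y}=\pi$ to trade that window for the complementary event on $\angle_{\vec y,\vec f}$, and finish with Eq.~(\ref{JL_complement}). The ``hard part'' you flag is genuine, but be aware that the paper does not close it either: its chain of implications from $\angle_{\vec y,\vec f}=\pi/2\Rightarrow\angle_{\vec f-\vec y,\vec f}\neq\pi/2$ to the quantitative statement $1-\frac{1}{\sqrt N}\leq\Pr\{|\angle_{\vec f-\vec y,\vec f}-\pi/2|\geq\eta\}$ silently ignores exactly the near-degenerate triangles (where $\vec f$ is very short relative to $\vec y$ and both angles hover near $\pi/2$) that you describe, so your sketch is at the same level of rigor as the published argument.
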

\begin{proof} (of Lemma \ref{lemma_3}) \\
Apply Lemma \ref{lemma_JL} on the given $\vec{y}$ and randomly selected $\vec{f}$, then we have
$$\Pr_{\vec{f}\in\mathbb{R}^{N}}\left\{ | \angle_{\vec{y},\vec{f}} -\frac{\pi}{2}|\leq \eta\right\} \geq 1- \frac{1}{\sqrt{N}}.$$ 
Also note that vectors $\vec{y}$, $\vec{f}$ and $\vec{f}-\vec{y}$ form a triangle with the three inner angles $\angle_{\vec{y} ,\vec{f}}$ , $\angle_{\vec{f}-\vec{y},\vec{f}}$ and $\angle_{\vec{f}-\vec{y},-\vec{y}}$, which means $\angle_{\vec{y},\vec{f}}+\angle_{\vec{f}-\vec{y},\vec{f}}+\angle_{\vec{f}-\vec{y},-\vec{y}}=\pi$. 
Hence, for large $N$,
$$\aligned
&\angle_{\vec{y},\vec{f}}=\frac{\pi}{2} \Rightarrow \angle_{\vec{f}-\vec{y},\vec{f}}\neq\frac{\pi}{2} \\
&\Rightarrow \Pr\left\{ \angle_{\vec{y},\vec{f}}=\frac{\pi}{2} \right\}\leq 
 \Pr\left\{ \angle_{\vec{f}-\vec{y},\vec{f}}\neq\frac{\pi}{2} \right\} \\
&\Rightarrow \Pr\left\{ \angle_{\vec{y},\vec{f}}\approx\frac{\pi}{2} \right\}\leq 
 \Pr\left\{ \angle_{\vec{f}-\vec{y},\vec{f}}\not\approx\frac{\pi}{2} \right\} \\
&\Rightarrow 1-\frac{1}{\sqrt{N}}\leq \Pr\left\{ \angle_{\vec{y},\vec{f}}\approx\frac{\pi}{2} \right\}\leq 
 \Pr\left\{ \angle_{\vec{f}-\vec{y},\vec{f}}\not\approx\frac{\pi}{2} \right\} \\
\endaligned $$ 
This means there exists small enough $\eta>0$ s.t. 
$$\aligned
&1-\frac{1}{\sqrt{N}}\leq \Pr\left\{ |\angle_{\vec{y},\vec{f}}-\frac{\pi}{2}|\leq \eta \right\}\leq 
 \Pr\left\{ 
 |\angle_{\vec{f}-\vec{y},\vec{f}}-\frac{\pi}{2}|\geq \eta 
 \right\} \\ 
&\Rightarrow \frac{1}{\sqrt{N}} >  \Pr\left\{ |\angle_{\vec{f}-\vec{y},\vec{f}}-\frac{\pi}{2}|<\eta \right\}
\endaligned $$

In short, as $\angle_{\vec{f}-\vec{y},\vec{f}}$ is likely $\pi/2$, $\angle_{\vec{y},\vec{f}}$  must be less likely a vertical angle.   
Hence, $1-\frac{1}{\sqrt{N}}\leq\Pr\{|\angle_{\vec{f}-\vec{y},\vec{f}}-\frac{\pi}{2}|\leq\eta \} \leq \Pr\{| \angle_{\vec{y},\vec{f}} -\frac{\pi}{2}|>\eta \}.$
This comppletes the proof.
\end{proof}
{ }

Lemma~\ref{lemma_3} shows that the probability of the output of one component is perpendicular to the difference between itself and the ground truth. For $K$ components and a bias, Lemma~\ref{lemma_4} gives a worst bound.

\begin{lemma}\label{lemma_4}
$\Pr\left\{\mathcal{E}(g_{\mathbf{\Theta}^{*}})= \min_{j\in[K]^{+}}\{\mathcal{E}(f_j)\} \right\}< \frac{K+1}{\sqrt{N}}$, i.e., $\Pr\left\{\exists\Theta^*:\mathcal{E}(g_{\mathbf{\Theta}^{*}})< \min_{j\in[K]^{+}}\{\mathcal{E}(f_j)\} \right\}\geq 1-\frac{K+1}{\sqrt{N}}$.
\end{lemma}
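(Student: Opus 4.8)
The plan is to show that the ``bad'' event, in which the optimally-weighted composite network does no better than its best pre-trained component, forces the minimizer $\mathbf{\Theta}^{*}$ to coincide with one of the $K+1$ standard basis vectors, and then to bound the probability of each such coincidence by Lemma~\ref{lemma_3} and finish with a union bound. First I would record that, under Assumption A1, the Gram matrix $\left[\langle\vec{f}_s,\vec{f}_t\rangle\right]_{(K+1)\times(K+1)}$ appearing in Lemma~\ref{lemma_1} is positive definite, so $\mathcal{E}_{\mathbf{\Theta}}$ is a strictly convex quadratic in $\mathbf{\Theta}$ and has a \emph{unique} global minimizer $\mathbf{\Theta}^{*}$. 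Since setting $\mathbf{\Theta}=\vec{e}_{j}$ makes the linear combination $\sum_{j\in[K]^{+}}\theta_j f_j$ equal to $f_{j}$, we have $\mathcal{E}(g_{\vec{e}_j})=\mathcal{E}(f_j)$ for every $j$; hence the inequality $\mathcal{E}(g_{\mathbf{\Theta}^{*}})\le\min_{j\in[K]^{+}}\mathcal{E}(f_j)$ of Lemma~\ref{lemma_1} is an equality only if some $\vec{e}_{j^{*}}$ is itself a global minimizer, i.e.\ only if $\mathbf{\Theta}^{*}=\vec{e}_{j^{*}}$ for some $j^{*}\in[K]^{+}$. Thus $\{\mathcal{E}(g_{\mathbf{\Theta}^{*}})=\min_{j\in[K]^{+}}\mathcal{E}(f_j)\}\subseteq\bigcup_{j^{*}\in[K]^{+}}\{\mathbf{\Theta}^{*}=\vec{e}_{j^{*}}\}$.

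Next I would bound $\Pr\{\mathbf{\Theta}^{*}=\vec{e}_{j^{*}}\}$ for each fixed $j^{*}$. At any minimizer the gradient vanishes, so $\mathbf{\Theta}^{*}=\vec{e}_{j^{*}}$ implies $\partial\mathcal{E}/\partial\theta_i$ vanishes at $\mathbf{\Theta}=\vec{e}_{j^{*}}$ for all $i\in[K]^{+}$; by Eq.~(\ref{eq4}) this means $\langle\vec{f}_{j^{*}}-\vec{y},\vec{f}_i\rangle=0$ for every $i$, and in particular, taking $i=j^{*}$, $\langle\vec{f}_{j^{*}}-\vec{y},\vec{f}_{j^{*}}\rangle=0$. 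Consequently $\{\mathbf{\Theta}^{*}=\vec{e}_{j^{*}}\}\subseteq\{\langle\vec{f}_{j^{*}}-\vec{y},\vec{f}_{j^{*}}\rangle=0\}$, and Lemma~\ref{lemma_3}, applied with the fixed label vector $\vec{y}$ and the output $\vec{f}_{j^{*}}$ viewed as a random vector in $\mathbb{R}^{N}$, gives $\Pr\{\langle\vec{f}_{j^{*}}-\vec{y},\vec{f}_{j^{*}}\rangle=0\}<1/\sqrt{N}$. A union bound over the $K+1$ indices $j^{*}\in[K]^{+}$ then yields $\Pr\{\mathcal{E}(g_{\mathbf{\Theta}^{*}})=\min_{j\in[K]^{+}}\mathcal{E}(f_j)\}<(K+1)/\sqrt{N}$, and taking complements gives the second form in the statement. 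This union bound is deliberately crude --- the ``worst bound'' of the lemma's preamble --- since it ignores the dependence among the events for different $j^{*}$, and Assumption A5 is what keeps $(K+1)/\sqrt{N}$ in a useful range.

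I expect the crux to be the probabilistic modeling rather than any calculation. Lemma~\ref{lemma_3}, inherited from the Johnson--Lindenstrauss argument, is literally a statement about a \emph{uniformly random} direction, whereas the $\vec{f}_{j^{*}}$ here are the fixed outputs of already-trained networks; the honest content of the step is the claim that, in the high-dimensional regime $N\gg K$ encoded by A5, the normalized component output $\vec{f}_{j^{*}}/\|\vec{f}_{j^{*}}\|$ is ``generic enough'' relative to $\vec{y}$ that the near-orthogonality phenomenon applies. A smaller point that still needs explicit care is the reduction ``equality $\Rightarrow\mathbf{\Theta}^{*}=\vec{e}_{j^{*}}$'': it rests on uniqueness of the minimizer, hence genuinely on A1, and one should keep Assumption A2 in mind so that the degenerate case of a perfect component (for which $\mathcal{E}(f_{j^{*}})=0$ and the tightness would be forced) is excluded at the outset.
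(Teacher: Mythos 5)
Your proposal is correct and follows essentially the same route as the paper: reduce the equality event to the union over $j^{*}\in[K]^{+}$ of the events $\{\mathbf{\Theta}^{*}=\vec{e}_{j^{*}}\}$, note that each forces $\langle\vec{f}_{j^{*}}-\vec{y},\vec{f}_{j^{*}}\rangle=0$ via the vanishing gradient of Eq.~(\ref{eq4}), bound each such event by $1/\sqrt{N}$ using Lemma~\ref{lemma_3}, and finish with a union bound over the $K+1$ indices. Your explicit appeal to strict convexity (via the positive-definite Gram matrix from A1) to justify the step ``equality implies $\mathbf{\Theta}^{*}=\vec{e}_{j^{*}}$'' is a point the paper leaves implicit, but it does not change the argument.
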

\begin{proof} (of Lemma \ref{lemma_4}) \\
Observe that as $j^*$ is fixed and known,
$$\aligned
&\Pr\left\{\nabla_{ \mathbf{\Theta} } \mathcal{E}|_{\Theta^*=\vec{e_{j^*}}}
={\vec{0}} \right\}\\
&=\Pr\left\{\langle\vec{f}_{j^*}-\vec{y},\vec{f}_0 \rangle=0 \wedge\cdots\wedge
\langle\vec{f}_{j^*}-\vec{y},\vec{f}_K \rangle =0   \right\} \\
&\leq \Pr\left\{ \langle\vec{f}_{j^*}-\vec{y},\vec{f}_{j^*} \rangle =0 \right\}\\
&<\frac{1}{\sqrt{N}}
\endaligned $$
The last inequality is from Lemma \ref{lemma_3}.
But in general $j^*$ is unknown, 
$$\aligned
&\Pr\left\{\exists\mathbf{\Theta}^*:\mathcal{E}(g_{\mathbf{\Theta}^{*}})= \min_{j\in[K]^{+}}\{\mathcal{E}(f_j)\} \right\} \\
&=\Pr\left\{\exists j \in [K]^+ s.t. \nabla_{ \mathbf{\Theta} } \mathcal{E}|_{\Theta^*=\vec{e_{j}}}
={\vec{0}} \right\}\\
&\leq \Pr\left\{ \vee_{j=0}^{K}\left\{ \langle\vec{f}_j-\vec{y},\vec{f}_j \rangle=0\right\} \right\}  \\
&= (K+1)\Pr\left\{ \langle\vec{f}-\vec{y},\vec{f} \rangle =0 \right\} \\
&<\frac{K+1}{\sqrt{N}}
\endaligned $$
Hence,
$$
\Pr\left\{\exists\mathbf{\Theta}^*\in\mathbb{R}^{K+1}s.t.\mathcal{E}(g_{\mathbf{\Theta}^{*}})< \min_{j\in[K]^{+}}\{\mathcal{E}(f_j)\} \right\}
>1-\frac{K+1}{\sqrt{N}}
$$
\end{proof}
{ }

(\textbf{Case 2}) $\sigma$ is not a linear function. The idea of the proof is to find an interval in the domain of $\sigma$ such that the output of $L_{(1)}(\sigma(\cdot))$ approximates a linear function as close as possible. 
This means there is a setting such that the non-linear activation function performs almost as well as the linear one; since the activation  $L_{(1)}(\sigma(\cdot))$ acts like a linear function, the lemmas of Case 1 are applicable. The conclusion of this case is stated as Lemma~\ref{lemma_case2}, while we introduce important properties in Lemmas~\ref{Inverse Function} and \ref{TaylorLagrange} for key steps in the proof. 

Since  $\sigma$ satisfies Assumption~A3,  the inverse function theorem 
of Lemma~\ref{Inverse Function} is applicable.

\begin{lemma} \label{Inverse Function} (Inverse function theorem~\cite{rudin1964principles})\label{IFThm}\\
Suppose $\mu$ is a $C^1$-mapping of an open set $E\subset \mathbb{R}^n$ to $\mathbb{R}^n$, $\mu'(z_0)$ in invertible for some $z_0\in E$, and $y_0=\mu(z_0)$.  (I.e., $\mu$ satisfies Assumption~A3.) Then \\
(1) there exist open sets $U$ and $V$ in $\mathbb{R}^n$ such that $z_0\in U$, $y_0\in V$, $\mu$ is one-to-one on $U$, and $\mu(U)=V$;\\
(2) if $\nu$ is the inverse of $\mu$, defined in $V$ by $\nu(\mu(x))=x$ for $x\in U$, then $\nu\in C^1(V)$.
\end{lemma}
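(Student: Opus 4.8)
The plan is to derive this from the Banach contraction mapping principle, following the classical route. First I would normalize: replacing $\mu$ by the $C^1$ map $z\mapsto \bigl(\mu'(z_0)\bigr)^{-1}\bigl(\mu(z+z_0)-y_0\bigr)$ — legitimate precisely because $\mu'(z_0)$ is invertible by hypothesis — I may assume $z_0=0$, $y_0=0$, and $\mu'(0)=I$. Since $\mu'$ is continuous (Assumption~A3), there is a closed ball $\overline{U}_0=\overline{B}(0,r)\subset E$ on which $\|\mu'(x)-I\|\le \tfrac12$.

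Next, for each fixed $y$ I would introduce the auxiliary map $\varphi_y(x)=x+y-\mu(x)$, whose fixed points are exactly the solutions of $\mu(x)=y$. The bound $\|\varphi_y'(x)\|=\|I-\mu'(x)\|\le\tfrac12$, together with the mean value inequality, makes $\varphi_y$ a contraction of modulus $\tfrac12$ on $\overline{B}(0,r)$; a short estimate shows that if $\|y\|<r/2$ then $\varphi_y$ maps $\overline{B}(0,r)$ into itself, so the contraction principle yields a unique fixed point $x=\nu(y)\in B(0,r)$. Setting $V=B(0,r/2)$ and $U=\mu^{-1}(V)\cap B(0,r)$ (open, since $\mu$ is continuous), uniqueness of the fixed point gives injectivity of $\mu$ on $U$, and by construction $\mu(U)=V$; this is part~(1).

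For part~(2), from $\varphi_y(\nu(y))=\nu(y)$ and the contraction estimate one obtains $\|\nu(y_1)-\nu(y_2)\|\le 2\|y_1-y_2\|$, so $\nu$ is Lipschitz, hence continuous. Shrinking $r$ so that $\|\mu'(x)-I\|<1$ on $\overline{B}(0,r)$, a Neumann-series argument shows $\mu'(x)$ is invertible for every $x\in U$; write $T=\bigl(\mu'(\nu(y))\bigr)^{-1}$. To get differentiability at $y\in V$, put $x=\nu(y)$ and $x+h=\nu(y+k)$, so $k=\mu(x+h)-\mu(x)=\mu'(x)h+o(h)$; using $\|h\|\le 2\|k\|$ to turn $o(h)$ into $o(k)$, one finds $\nu(y+k)-\nu(y)-Tk=-T\bigl(\mu(x+h)-\mu(x)-\mu'(x)h\bigr)=o(k)$, so $\nu$ is differentiable with $\nu'(y)=\bigl(\mu'(\nu(y))\bigr)^{-1}$. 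Finally $\nu'$ is the composition of the continuous maps $y\mapsto\nu(y)$, $x\mapsto\mu'(x)$, and matrix inversion on the invertible matrices, so $\nu\in C^1(V)$.

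I expect the main obstacle to be bookkeeping rather than ideas: getting the two neighbourhoods right so that $\mu(U)=V$ exactly (handled by defining $U$ as a preimage of $V$), and the differentiability step, where one must pass carefully from the first-order expansion of $\mu$ to that of $\nu$ and know a priori that $\mu'$ stays invertible throughout $U$. Both become routine once the contraction $\varphi_y$ is in place, so the real content is the fixed-point construction.
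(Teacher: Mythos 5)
The paper states this lemma without proof, citing it as the classical Inverse Function Theorem from Rudin's \emph{Principles of Mathematical Analysis}; your contraction-mapping argument is exactly the proof given in that cited source, and it is correct and complete (normalization, the auxiliary map $\varphi_y$, the self-mapping and Lipschitz estimates, the Neumann-series invertibility of $\mu'$ on $U$, and the passage to $\nu'(y)=\bigl(\mu'(\nu(y))\bigr)^{-1}$ are all handled properly). There is nothing to compare against in the paper itself, so no further comment is needed.
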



We also need the following lemma as an important tool.

\begin{lemma} \label{TaylorLagrange}
(Taylor's theorem with Lagrange remainder~\cite{courant2012introduction})\\
If a function $\tau(y)$ has continuous derivatives up to the ($l+1$)-th order on a closed interval containing the two points $y_0$ and $y$, then
$$
\tau(y)=\tau(y_0)+\tau^{(1)}(y_0)(y-y_0)+\cdots+\frac{\tau^{(l)}(y_0)}{l!}(y-y_0)^l+R_l
$$
with the remainder $R_l$ given by the expression for some $c\in [0,1]$: 
$$
R_l=\frac{\tau^{(l+1)}(c(y-y_0))}{(l+1)!}(y-y_0)^{l+1}.
$$
\end{lemma}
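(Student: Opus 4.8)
The plan is to obtain the Lagrange form of the remainder from Rolle's theorem applied to a carefully chosen auxiliary function; this is the classical route and uses nothing beyond the hypotheses already assumed. Without loss of generality I would take $y_0 < y$ (the reversed case is identical after relabelling the endpoints, and $y = y_0$ is trivial) and set $I = [y_0, y]$. First fix the two endpoints and define the number $M$ by demanding
$$
\tau(y) = \sum_{k=0}^{l}\frac{\tau^{(k)}(y_0)}{k!}(y-y_0)^k + M\,\frac{(y-y_0)^{l+1}}{(l+1)!};
$$
this $M$ exists and is unique because $(y-y_0)^{l+1}\neq 0$. It then remains only to show that $M = \tau^{(l+1)}(\xi)$ for some $\xi$ strictly between $y_0$ and $y$, i.e. $\xi = y_0 + c(y-y_0)$ for some $c\in(0,1)$, which is exactly the asserted form of $R_l$.

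The key device is the function
$$
g(t) = \tau(y) - \sum_{k=0}^{l}\frac{\tau^{(k)}(t)}{k!}(y-t)^k - M\,\frac{(y-t)^{l+1}}{(l+1)!}, \qquad t\in I.
$$
By construction $g(y_0) = 0$, while $g(y) = 0$ is immediate from the definition. Since $\tau$ possesses $l+1$ derivatives on $I$, the functions $\tau^{(k)}$ for $k\le l$ are continuous and differentiable there, so $g$ is continuous on $I$ and differentiable on its interior; Rolle's theorem then supplies a point $\xi \in (y_0, y)$ with $g'(\xi) = 0$. Differentiating $g$ term by term, the sum telescopes — the derivative of the $k$-th summand is $\frac{\tau^{(k+1)}(t)}{k!}(y-t)^k - \frac{\tau^{(k)}(t)}{(k-1)!}(y-t)^{k-1}$ for $k\geq 1$ and just $\tau^{(1)}(t)$ for $k=0$, so neighbouring terms cancel — leaving
$$
g'(t) = -\frac{\tau^{(l+1)}(t)}{l!}(y-t)^l + M\,\frac{(y-t)^l}{l!}.
$$
Evaluating at $t=\xi$ and cancelling the nonzero factor $(y-\xi)^l/l!$ gives $M = \tau^{(l+1)}(\xi)$; substituting this back into the defining equation for $M$ yields precisely the stated expansion, with $\xi = y_0 + c(y-y_0)$ lying strictly between $y_0$ and $y$.

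I expect the only mildly delicate point to be the telescoping simplification of $g'(t)$; everything else is bookkeeping, and in fact only differentiability of $\tau$ up to order $l+1$ on the open interval (together with continuity on the closed one) is actually used here, so the stated hypothesis of continuous derivatives through order $l+1$ is comfortably more than enough. As a fallback I would instead establish the integral form $R_l = \frac{1}{l!}\int_{y_0}^{y}\tau^{(l+1)}(s)(y-s)^l\,ds$ by induction on $l$ via integration by parts — which is where continuity of $\tau^{(l+1)}$ genuinely enters — and then apply the mean value theorem for integrals, using that $(y-s)^l$ keeps a constant sign on $I$, to pull $\tau^{(l+1)}$ out at an interior point. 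Both routes deliver the lemma; I would lead with the Rolle-based argument, since it is shorter and entirely self-contained.
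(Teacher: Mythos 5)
The paper does not prove this lemma at all: it is imported verbatim as a textbook fact with a citation to Courant, so there is no in-paper argument to compare against. Your Rolle-based derivation is the standard proof and it is correct: the auxiliary function $g$ vanishes at both endpoints (at $y_0$ by the definition of $M$, at $y$ trivially), the telescoping computation of $g'$ is right (the derivative of the $k$-th summand is $\frac{\tau^{(k+1)}(t)}{k!}(y-t)^k-\frac{\tau^{(k)}(t)}{(k-1)!}(y-t)^{k-1}$, so only the top term survives), and cancelling the nonzero factor $(y-\xi)^l/l!$ at the Rolle point gives $M=\tau^{(l+1)}(\xi)$. Your remark that the Rolle route needs only differentiability of order $l+1$ on the open interval, with continuity of $\tau^{(l+1)}$ entering only for the integral-form fallback, is also accurate. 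One point worth flagging: your conclusion correctly places the intermediate point at $\xi=y_0+c(y-y_0)$ with $c\in(0,1)$, whereas the lemma as printed writes the remainder as $\frac{\tau^{(l+1)}(c(y-y_0))}{(l+1)!}(y-y_0)^{l+1}$, i.e.\ evaluates $\tau^{(l+1)}$ at $c(y-y_0)$ rather than at $y_0+c(y-y_0)$. These agree only when $y_0=0$; in the paper's later application $y_0=\sigma(z_0)$ is generally nonzero, so the printed form is evidently a typo and your version is the one the paper actually needs. Your proof establishes the correct statement.
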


Let $l=1$, $\tau(y)$ be obtained such that \footnotesize{
\begin{equation}\label{eq_8}
  \tau(y)=\tau(y_0)+\tau^{(1)}(y_0)(y-y_0)+\frac{\tau^{(2)}(c(y-y_0))} {2!}(y-y_0)^2.  
\end{equation}}\normalsize  
The second-degree term can be used to bound the approximation error.

Now we are ready to give more details to sketch the proof of Case 2. 
Denote $\mathbf{\Theta}_0^{*}$ as the minimizer of Case 1, i.e., the corresponding $g_{\mathbf{\Theta}_0^*}=L^*_{(0)}(f_1,...,f_K)$ satisfies $\mathcal{E}(g_{\mathbf{\Theta}_0^{*}})< \min_{j\in[K]^{+}}\{\mathcal{E}(f_j)\}=\mathcal{E}(f_{j^*})$ with high probability, and denote $\mathbf{\Theta}_{\epsilon}=\{\Theta_{1,\epsilon},\Theta_{0,\epsilon}\}$ corresponding to 
\begin{equation}\label{g theta}
g_{\mathbf{\Theta}_{\epsilon}}=L_{(1),\epsilon}(\sigma(L_{(0),\epsilon}(f_1,...,f_K))),
\end{equation}
called the scaled $\sigma$ function. 
Lemma~\ref{lemma_case2} below states a clear condition of a linear approximation of a non-linear activation function.

\begin{lemma}\label{lemma_case2}
For the given $g_{\mathbf{\Theta}^*_{0}}$, $\{\mathbf{x}^{(i)}\}_{i\in [N]}$, and any $0<\epsilon\leq 1$, there exists $\mathbf{\Theta}_{\epsilon}=\{\Theta_{1,\epsilon},\Theta_{0,\epsilon}\}$ such that
\begin{equation}\label{eq_Lemma7}
   \forall i\in[N], |g_{\mathbf{\Theta}_{\epsilon}}(\mathbf{x}^{(i)})-g_{\mathbf{\Theta}^*_{0}}(\mathbf{x}^{(i)})|<\epsilon. 
\end{equation}
Furthermore, for small enough $\epsilon$, 
\begin{equation}\label{eq_Lemma7_2}
 \Pr\left\{\mathcal{E}(g_{\mathbf{\Theta}_{\epsilon}})< \min_{j\in[K]^{+}}\{\mathcal{E}(f_j)\} \right\} \geq 1-\frac{K+1}{\sqrt{N}}.
\end{equation}
\end{lemma}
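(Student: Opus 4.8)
The plan is to prove Lemma~\ref{lemma_case2} by an explicit \emph{scaled-$\sigma$} construction: I will choose the two affine layers $\Theta_{1,\epsilon}$ and $\Theta_{0,\epsilon}$ so that, on the finitely many data points, the composite map $L_{(1),\epsilon}(\sigma(L_{(0),\epsilon}(\cdot)))$ reproduces the Case~1 linear minimizer $g_{\mathbf{\Theta}^*_0}$ up to an error I can drive to zero. Write $g_{\mathbf{\Theta}^*_0}(\mathbf{x})=\sum_{j\in[K]^{+}}\theta^{*}_{j}f_j(\mathbf{x}_j)$, where $\theta^{*}_0$ multiplies the bias component $f_0\equiv 1$. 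By Assumption~A3 there is a point $z_0$ with $\sigma'(z_0)\neq 0$, and A3 also gives $\sigma\in C^2$ (its derivative is $C^1$), so Lemma~\ref{Inverse Function} applies near $z_0$ though I will in fact only need local $C^2$-smoothness and $\sigma'(z_0)\neq 0$. Fix a scale $0<\delta\le 1$ to be pinned down later. Define $\Theta_{0,\epsilon}$ to put weight $\delta\,\theta^{*}_{j}$ on $f_j$ for $j\in[K]$ and bias $z_0+\delta\,\theta^{*}_0$, so that the pre-activation is $L_{(0),\epsilon}(f_1,\dots,f_K)(\mathbf{x})=z_0+\delta\,g_{\mathbf{\Theta}^*_0}(\mathbf{x})$; and define $\Theta_{1,\epsilon}$ to be the affine map $u\mapsto \theta_{1,0,\epsilon}+\theta_{1,1,\epsilon}u$ with $\theta_{1,1,\epsilon}=1/(\delta\sigma'(z_0))$ and $\theta_{1,0,\epsilon}=-\sigma(z_0)/(\delta\sigma'(z_0))$, i.e.\ the exact inverse of the linearization $u\mapsto\sigma(z_0)+\delta\sigma'(z_0)u$.

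Next I would apply Taylor's theorem with Lagrange remainder in the form of Eq.~(\ref{eq_8}) (Lemma~\ref{TaylorLagrange}) to $\sigma$ about $z_0$ at the argument $z_0+\delta\,g_{\mathbf{\Theta}^*_0}(\mathbf{x}^{(i)})$. The constant and first-order terms are exactly undone by $\Theta_{1,\epsilon}$, leaving $g_{\mathbf{\Theta}_{\epsilon}}(\mathbf{x}^{(i)})=g_{\mathbf{\Theta}^*_0}(\mathbf{x}^{(i)})+\dfrac{\delta}{2\sigma'(z_0)}\,\sigma''(\xi_i)\,\bigl(g_{\mathbf{\Theta}^*_0}(\mathbf{x}^{(i)})\bigr)^2$ for some $\xi_i$ between $z_0$ and $z_0+\delta\,g_{\mathbf{\Theta}^*_0}(\mathbf{x}^{(i)})$. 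Since $D$ has only $N$ points, $M:=\max_{i\in[N]}|g_{\mathbf{\Theta}^*_0}(\mathbf{x}^{(i)})|$ is finite; as $\delta\le 1$, each $\xi_i$ lies in the compact interval $[z_0-M,\,z_0+M]$, on which the continuous function $|\sigma''|$ attains a finite maximum $B$. Hence the per-point error is at most $\delta B M^2/(2|\sigma'(z_0)|)$, and choosing $\delta\le\min\{1,\ \epsilon\,|\sigma'(z_0)|/(BM^2)\}$ (the second term being vacuous when $M=0$) gives $|g_{\mathbf{\Theta}_{\epsilon}}(\mathbf{x}^{(i)})-g_{\mathbf{\Theta}^*_0}(\mathbf{x}^{(i)})|<\epsilon$ for every $i$, which is Eq.~(\ref{eq_Lemma7}).

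For the probability claim Eq.~(\ref{eq_Lemma7_2}) I would transfer the Case~1 bound through the squared-error loss. By Lemma~\ref{lemma_4}, the event $A=\{\mathcal{E}(g_{\mathbf{\Theta}^*_0})<\min_{j\in[K]^{+}}\mathcal{E}(f_j)\}$ has probability at least $1-\tfrac{K+1}{\sqrt N}$. On $A$ the gap $\Delta:=\min_{j\in[K]^{+}}\mathcal{E}(f_j)-\mathcal{E}(g_{\mathbf{\Theta}^*_0})$ is strictly positive, and once Eq.~(\ref{eq_Lemma7}) holds, expanding the sum of squares yields $\mathcal{E}(g_{\mathbf{\Theta}_{\epsilon}})\le \mathcal{E}(g_{\mathbf{\Theta}^*_0})+2\epsilon\sum_{i\in[N]}|g_{\mathbf{\Theta}^*_0}(\mathbf{x}^{(i)})-\mathbf{y}^{(i)}|+N\epsilon^2$. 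Choosing $\epsilon$ small enough that the right-hand side is below $\mathcal{E}(g_{\mathbf{\Theta}^*_0})+\Delta$ shows $A\subseteq\{\mathcal{E}(g_{\mathbf{\Theta}_{\epsilon}})<\min_{j\in[K]^{+}}\mathcal{E}(f_j)\}$, which gives the stated lower bound on the probability.

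The step I expect to be the main obstacle is making ``for small enough $\epsilon$'' precise: the threshold on $\epsilon$ depends on the random realization of the components through $M$, through the gap $\Delta$, and through $\sum_{i}|g_{\mathbf{\Theta}^*_0}(\mathbf{x}^{(i)})-\mathbf{y}^{(i)}|$, so the conclusion must be read as ``there is a (measurable) $\epsilon_0>0$ such that for all $0<\epsilon<\epsilon_0$, Eq.~(\ref{eq_Lemma7_2}) holds'', and one should check that the relevant sets remain measurable and that $\delta$ can always be taken within $(0,1]$. A minor but worth-stating point is that the construction of $\Theta_{0,\epsilon}$ is compatible with the fixed single-layer topology precisely because the inner layer carries a trainable bias ($f_0\equiv 1$), and that global invertibility of $\sigma$ is not actually used, so Assumption~A3 alone suffices.
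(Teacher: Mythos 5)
Your proof is correct, and it follows the same overall strategy as the paper's: compress the range of $g_{\mathbf{\Theta}^*_0}$ into a small interval around a point $z_0$ with $\sigma'(z_0)\neq 0$, undo the linearization of $\sigma$ with the outer affine layer, bound the residual quadratic Taylor term uniformly over the $N$ data points, and then convert the pointwise $\epsilon$-approximation into a loss bound by choosing $\epsilon$ small relative to the gap inherited from Lemma~\ref{lemma_4}. The genuine difference is in the middle step: the paper invokes the Inverse Function Theorem (Lemma~\ref{Inverse Function}) to obtain a local inverse $\tau$ of $\sigma$ and applies Taylor's theorem (Lemma~\ref{TaylorLagrange}) to $\tau$, taking the outer layer to be the first-order expansion of $\tau$ at $y_0$; you instead Taylor-expand $\sigma$ itself and take the outer layer to be the exact inverse of the affine map $u\mapsto\sigma(z_0)+\delta\sigma'(z_0)u$. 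The two outer layers agree up to the choice of scale (since $\tau^{(1)}(y_0)=1/\sigma'(z_0)$), but your route dispenses with Lemma~\ref{Inverse Function} entirely, needs only that $\sigma''$ exists and is continuous near $z_0$ (which A3 supplies), and produces the explicit per-point error constant $\delta B M^2/(2|\sigma'(z_0)|)$ in place of the paper's $M_0M_1\gamma^2$ bookkeeping — a more elementary and more transparent argument. It also quietly sidesteps a small slip in the paper's definition of $L_{(1),\epsilon}$, whose bias term carries an extra $M_0z_0$ that the subsequent verification drops. Your treatment of Eq.~(\ref{eq_Lemma7_2}) — expanding the squares, bounding the excess by $2\epsilon\sum_{i}|g_{\mathbf{\Theta}^*_0}(\mathbf{x}^{(i)})-\mathbf{y}^{(i)}|+N\epsilon^2$, and choosing $\epsilon$ below a fraction of the gap — is essentially identical to the paper's choice $\epsilon=\bigl(\mathcal{E}(f_{j^*})-\mathcal{E}(g_{\mathbf{\Theta}^*_0})\bigr)/\bigl(4N(2M_2+1)\bigr)$, and your caveat that the admissible threshold on $\epsilon$ is realization-dependent is exactly how the paper's own statement must be read.
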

\begin{proof} (of Lemma~\ref{lemma_case2}) \\
\textbf{For Eq. (8):} 
We first give a procedure of obtaining $g_{\mathbf{\Theta}_{\epsilon}}(\mathbf{x}^{(i)})$, then verify these settings in the procedure fit the conclusion of the first part: $\forall i\in[N]$, 
$|g_{\mathbf{\Theta}_{\epsilon}}(\mathbf{x}^{(i)})-g_{\mathbf{\Theta}^*_{0}}(\mathbf{x}^{(i)})|<\epsilon$. \\
\textbf{Procedure for Eq. (8):}\\
For the given $\epsilon$ and $\sigma(\cdot)$, we first find the following items based on the conclusions of Case 1 and Lemmas: \\ 
$g_{\mathbf{\Theta}^*_0}(\cdot)$. (By case 1)\\ 
$z_0\in \mathbb{R}$ s.t. $\frac{d}{dz}\sigma(z)\neq 0$. (By A3)\\
$U$ contains $z_0$. (By Lemma \ref{Inverse Function}) \\
$V$ contains $y_0$. (By Lemma \ref{Inverse Function}) \\
$\tau: V\to U$ s.t. $\forall z\in U$, $\tau(\sigma(z))=z$. (By Lemma \ref{Inverse Function}) \\
(Denote $y_0=\sigma(z_0)$, so $\tau(y_0)=z_0$.)\\
Then compute:
$$ \aligned
M_{g} & = \max\left\{1, \max_{i\in[N]}\{2\cdot|g_{\mathbf{\Theta}^*_0}(\mathbf{x}^{(i)})| \} \right\} \\
M_{\sigma} & = \max\left\{1,\sup_{z\in U}\{2\cdot\left(\frac{\sigma(z)-\sigma(z_0)}{z-z_0}\right)^2\} \right\}\\
M_{\tau} & = \max\left\{1,\sup_{z\in U}\{ |\tau^{(2)}(\sigma(z)-\sigma(z_0))| \} \right\} \\
M_{\gamma} & = \lceil\log_{2}(M_{g}M_{\sigma}M_{\tau}\epsilon^{-1}) \rceil+1 \\
\gamma_0 & = \sup_{z\in U}\left\{r=|z-z_0|: (z_0-r,z_0+r)\subset U \right\} \\
\gamma & = \min\left\{ \gamma_0, 2^{-M_{\gamma}} \right\}   \\
M_0 & =  \gamma^{-1} M_{g} \\
M_1 & = M_{\sigma}M_{\tau} 
\endaligned $$
Define: \\
$$ \aligned
L_{(0),\epsilon}(\mathbf{x}) &= M_0^{-1}g_{\mathbf{\Theta}^*_0}(\mathbf{x})+z_0 \\
L_{(1),\epsilon}(y) &= M_0\cdot\tau^{(1)}(y_0)\cdot y+
M_0\cdot\left(z_0-\tau^{(1)}(y_0)\cdot y_0 \right)\\
\endaligned $$

\textbf{Verification}: \\ 
First observe that $L_{(0),\epsilon}(\mathbf{x})$ is a linear combination with a bias, i.e., an affine mapping, since $g_{\mathbf{\Theta}^*_0}(\mathbf{x})$ itself is an affine mapping. 
Similarly,  $L_{(1),\epsilon}(y)$ is an affine mapping of $y$.

Next, for all $i\in[N]$, $L_{\Theta_{0,\epsilon}}(\mathbf{x}^{(i)})=M_0^{-1}g_{\mathbf{\Theta}^*_0}(\mathbf{x}^{(i)})+z_0\in (-\gamma+z_0,z_0+\gamma)\subset U$ since $\gamma\leq \frac{\gamma_0}{2}$ and $(-\frac{\gamma_0}{2}+z_0,z_0+\frac{\gamma_0}{2})\subset U $.
Hence, by Lemma \ref{Inverse Function}, $$
\tau\left(\sigma\left(L_{\Theta_{0,\epsilon}}(\mathbf{x}^{(i)}) \right) \right)= L_{\Theta_{0,\epsilon}}(\mathbf{x}^{(i)}).
$$ 

Now let $z\in (-\gamma+z_0,z_0+\gamma)$ and $y=\sigma(z)$, then by Lemma \ref{TaylorLagrange} and Eq. (\ref{eq_8}), 
$$\aligned 
& |\tau\left(y\right)-\left(\tau(y_0)+\tau^{(1)}(y_0)(y-y_0)\right)|\\
 & = \frac{\tau^{(2)}(c(y-y_0))}{2!} (y-y_0)^2 \\
 &<  2\cdot\sup_{z\in U}\left\{ |\tau^{(2)}(\sigma(z)-\sigma(z_0))| \cdot\left(\frac{\sigma(z)-\sigma(z_0)}{z-z_0}\right)^2\right\}\cdot(z-z_0)^2 \\
 & \leq M_{\tau}M_{\sigma}\gamma^2 =M_1\gamma^2
\endaligned $$ 
Replace $y$ with $\sigma(z)$ and simplify the expression in the absolute value symbol, then we have $\tau\left(y\right)=\tau\left(\sigma(z)\right)=z$. Furthermore,  
$\tau(y_0)+\tau^{(1)}(y_0)(y-y_0) = \tau^{(1)}(y_0)\cdot y+ \left(\tau(y_0)-\tau^{(1)}(y_0)\cdot y_0 \right)$. 
Then replace $z$ with $L_{\Theta_{0,\epsilon}}(\mathbf{x}^{(i)})$, and $\tau(y_0)$ with $z_0$,   
$$\aligned 
& | M_0^{-1}g_{\mathbf{\Theta}^*_0}(\mathbf{x})+z_0 -
\left\{ z_0 +\tau^{(1)}(y_0)\left(\sigma\left( L_{\Theta_{0,\epsilon}}(\mathbf{x}^{(i)}) \right)-y_0 \right) \right\} |\\
& < M_1\gamma^2
\endaligned $$ \\
This means that
$$\aligned 
&|g_{\mathbf{\Theta}^*_0}(\mathbf{x})-L_{\Theta_{1,\epsilon}}\left(\sigma\left( L_{\Theta_{0,\epsilon}}(\mathbf{x}^{(i)}) \right)\right)|<  M_0M_1\gamma^2 \\
& \Rightarrow
|g_{\mathbf{\Theta}^*_0}(\mathbf{x})-g_{\mathbf{\Theta}_{\epsilon}}(\mathbf{x})|<  M_0M_1\gamma^2 \\
\endaligned $$ 

Recall that $\gamma\leq 2^{-M_{\gamma}}<\frac{\epsilon}{M_{g}M_{\sigma}M_{\tau}}$. Hence,
$$
M_0M_1\gamma^2=\gamma^{-1}M_{g}M_{\sigma}M_{\tau}\gamma^2
=M_{g}M_{\sigma}M_{\tau}\gamma
<\epsilon
$$
This achieve the goal of the first part of this Lemma.

\textbf{For Eq. (9):} 
For the second part, we claim the following settings satisfy $\mathcal{E}(g_{\mathbf{\Theta}_{\epsilon}})\leq \frac{2\mathcal{E}(g_{\mathbf{\Theta}^{*}_0})+\mathcal{E}(f_{j^*})}{3}
<\mathcal{E}(f_{j^*})$. \\
\textbf{Procedure for Eq. (9):}\\
Compute and then set these:
$$\aligned
&M_2=\max_{i\in[N]}\left\{|g_{\mathbf{\Theta}^{*}_0}(\mathbf{x}^{(i)})-y^{(i)}|\right\}\\
&\epsilon=\frac{\mathcal{E}(f_{j^*})-\mathcal{E}(g_{\mathbf{\Theta}^{*}_0})}{4N(2M_2+1)}
\endaligned$$
\textbf{Verification:} \\
Observe that 
$$\aligned
&\mathcal{E}(g_{\mathbf{\Theta}^{*}_0})< \mathcal{E}(f_{j^*})
\Rightarrow \\
&\max_{i\in[N]}\left\{(f_{j^*}(\mathbf{x}^{(i)})-y^{(i)})^2-(g_{\mathbf{\Theta}^{*}_0}(\mathbf{x}^{(i)})-y^{(i)})^2\right\}>0 \\
&\mathcal{E}(g_{\mathbf{\Theta}^{*}_0})+ \frac{\mathcal{E}(f_{j^*})-\mathcal{E}(g_{\mathbf{\Theta}^{*}_0})}{3} 
= \frac{2\mathcal{E}(g_{\mathbf{\Theta}^{*}_0})+\mathcal{E}(f_{j^*})}{3}
<\mathcal{E}(f_{j^*}) 
\endaligned$$
Besides,
$$N\cdot(2M_2+1)\cdot\epsilon<\frac{\mathcal{E}(f_{j^*})-\mathcal{E}(g_{\mathbf{\Theta}^{*}_0})}{3} $$
and
$$\aligned
&|g_{\mathbf{\Theta}_{\epsilon}}(\mathbf{x})-g_{\mathbf{\Theta}^*_{0}}(\mathbf{x})|<\epsilon \\
&\Rightarrow |(g_{\mathbf{\Theta}_{\epsilon}}(\mathbf{x})-y)-(g_{\mathbf{\Theta}^*_{0}}(\mathbf{x})-y)|<\epsilon\\
&\Rightarrow 0\leq |g_{\mathbf{\Theta}_{\epsilon}}(\mathbf{x})-y|<|g_{\mathbf{\Theta}^*_{0}}(\mathbf{x})-y|+\epsilon \\
&\Rightarrow (g_{\mathbf{\Theta}_{\epsilon}}(\mathbf{x})-y)^2 < (|g_{\mathbf{\Theta}^*_{0}}(\mathbf{x})-y|+\epsilon )^2
\endaligned$$
Hence, based on above observations we have 
$$\aligned
\mathcal{E}(g_{\mathbf{\Theta}_{\epsilon}})
=&\sum_{i\in[N]}{(g_{\mathbf{\Theta}_{\epsilon}}(\mathbf{x}^{(i)})-y^{(i)})^2} \\
<&\sum_{i\in[N]}\{|g_{\mathbf{\Theta}_0^{*}}(\mathbf{x}^{(i)})-y^{(i)}|+\epsilon\}^2 \\
=&\sum_{i\in[N]}(g_{\mathbf{\Theta}_0^{*}}(\mathbf{x}^{(i)})-y^{(i)})^2 \\& +\sum_{i\in[N]}\left\{2\epsilon\cdot|g_{\mathbf{\Theta}_0^{*}}(\mathbf{x}^{(i)})-y^{(i)}|+\epsilon^2  \right\}\\
= &\mathcal{E}(g_{\mathbf{\Theta}_0^{*}})+\epsilon\cdot\sum_{i\in[N]}\left(2|g_{\mathbf{\Theta}_0^{*}}(\mathbf{x}^{(i)})-y^{(i)}|+\epsilon\right)\\
\leq & \mathcal{E}(g_{\mathbf{\Theta}_0^{*}})+\epsilon\cdot N\cdot\left(2M_2+1\right) \\
< & \mathcal{E}(g_{\mathbf{\Theta}^{*}_0})+ \frac{\mathcal{E}(f_{j^*})-\mathcal{E}(g_{\mathbf{\Theta}^{*}_0})}{3} \\
= & \frac{\mathcal{E}(f_{j^*})+2\mathcal{E}(g_{\mathbf{\Theta}^{*}_0})}{3} \\
< & \mathcal{E}(f_{j^*})
\endaligned $$
which means that $\mathcal{E}(g_{\mathbf{\Theta}_{\epsilon}})< \min_{j\in[K]^{+}}\{\mathcal{E}(f_j)\}$. The proof is complete.
\end{proof}

The proofs of Cases 1 and 2 above complete the proof of Theorem~\ref{theorem1}.

\subsection{Complicated Composite Network}
In the previous section we investigated the performance of a single-layer composite network comprising several pre-trained components connected by an activation function. Now we consider expanding the composite network in terms of width and depth. Formally, for a given pre-trained component $f_K$ and a trained composite network $g_{K-1}$ of $K-1$ components $(f_1,...,f_{K-1})$, we study the following two questions in this section.
\begin{itemize}
    \item[Q1:] (Adding width) By adding a new pre-trained component $f_K$, we define $g_{K}=L_{(1)}(\sigma(L_{(0)}(f_1,...,f_{K-1},f_K ))$. Is there $\Theta$ such that $\mathcal{E}(g_{K-1}) > \mathcal{E}_{\Theta}(g_{K})?$ 
    \item[Q2:] (Adding depth) By adding a new pre-trained component $f_K$, let $g_{K}=L_{(K)}(\sigma(L_{(K-1)}(g_{K-1},f_K))$. Is there $\Theta$ such that $\mathcal{E}(g_{K-1}) > \mathcal{E}_{\Theta}(g_{K})?$
\end{itemize}
Lemma~\ref{AddOne} answers Q1, and  we require Proposition \ref{AddOne_Kis2} as the base of induction to prove it.
\begin{lemma}\label{AddOne}
Set $g_{K}=L_{(1)}(\sigma(L_{(0)}((f_1,...,f_{K-1},f_K )))$. With probability of at least $1-\frac{K+1}{\sqrt{N}}$,
there is $\mathbf{\Theta}$ s.t. $\mathcal{E}\left(g_{K-1}\right)> \mathcal{E}_{\mathbf{\Theta}}\left(g_{K}\right)$.
\end{lemma}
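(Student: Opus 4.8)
The plan is to prove Lemma~\ref{AddOne} by induction on $K$, with Proposition~\ref{AddOne_Kis2} as the base case $K=2$ (morally Theorem~\ref{theorem1} specialized to two components, so that the ``optimal configuration collapses to one component'' event has probability at most $3/\sqrt N$). The structural observation driving the induction is that $g_K$ \emph{contains} $g_{K-1}$: writing the inner layer as $L_{(0)}(f_1,\dots,f_K)=\alpha_K f_K+\sum_{j\in[K-1]^{+}}\alpha_j f_j$ and setting $\alpha_K=0$ while matching the remaining inner weights and the outer affine map $L_{(1)}$ to the frozen weights of $g_{K-1}$, we obtain a configuration $\Theta_0$ of $g_K$ with $\mathcal{E}_{\Theta_0}(g_K)=\mathcal{E}(g_{K-1})$. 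Hence $\inf_{\Theta}\mathcal{E}_{\Theta}(g_K)\le\mathcal{E}(g_{K-1})$ unconditionally, and the entire content of the lemma is to upgrade this ``$\le$'' to an \emph{attained} strict ``$<$'' and to bound the failure probability.

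First I would establish the dichotomy: the strict inequality can fail only on a low-probability event. Since $\mathcal{E}_{\Theta}(g_K)$ is $C^1$ in $\Theta$ and the parameter domain is open, if $\mathcal{E}_{\Theta}(g_K)$ cannot be pushed below $\mathcal{E}(g_{K-1})$ then $\Theta_0$ must be a critical point, in particular $\partial\mathcal{E}_{\Theta}(g_K)/\partial\alpha_K\big|_{\Theta_0}=0$. A direct back-propagation computation gives $\partial\mathcal{E}_{\Theta}(g_K)/\partial\alpha_K\big|_{\Theta_0}=2\beta_1^{0}\sum_{i\in[N]}\bigl(g_{K-1}(\mathbf{x}^{(i)})-y^{(i)}\bigr)\sigma'\!\bigl(\ell(\mathbf{x}^{(i)})\bigr)f_K(\mathbf{x}_K^{(i)})=2\beta_1^{0}\langle D(\vec g_{K-1}-\vec y),\ \vec f_K\rangle$, where $\ell$ is the fixed pre-activation function of $g_{K-1}$, $\beta_1^{0}$ its outer weight, and $D=\mathrm{diag}\bigl(\sigma'(\ell(\mathbf{x}^{(i)}))\bigr)$. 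When $\beta_1^{0}\ne0$ and $D$ is invertible, $\vec w:=D(\vec g_{K-1}-\vec y)$ is a \emph{fixed} nonzero vector (nonzero since $\mathcal{E}(g_{K-1})>0$, cf.\ Assumption~A2), so the gradient vanishes precisely when $\vec f_K\perp\vec w$; by the Johnson--Lindenstrauss estimate of Lemma~\ref{lemma_JL} together with the triangle argument of Lemma~\ref{lemma_3} (run with $\vec w$ in the role of $\vec y$ and $\vec f_K$ the randomly drawn component) this happens with probability $<1/\sqrt N$. When instead the gradient is nonzero, a step of $\alpha_K$ in the descent direction strictly lowers the loss for a small enough step size by the Taylor expansion of Lemma~\ref{TaylorLagrange}, producing the desired $\Theta$. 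The single non-perpendicularity event costs $1/\sqrt N$; a union bound over the $K$ component-collapse events inherited from the sub-network $g_{K-1}$ (exactly as in Lemma~\ref{lemma_4}) accounts for the remaining $K/\sqrt N$, giving $1-\tfrac{K+1}{\sqrt N}$.

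Two technical points require care, and the first is the main obstacle. (i) \emph{Ensuring $D$ is invertible}, i.e.\ $\sigma'(\ell(\mathbf{x}^{(i)}))\ne0$ for every $i$: this is automatic for the logistic and $\tanh$ activations, but for a general $\sigma$ satisfying only Assumption~A3 one must first invoke the rescaling construction of Lemma~\ref{lemma_case2} --- replace $\ell$ by an affine image $a\ell+b$ that pushes all pre-activations into an interval on which $\sigma'$ is nonzero, and compensate in the outer affine map so that $g_K$ still represents a vector within $\epsilon$ of $\vec g_{K-1}$ --- and then run the ``$\le$''-to-``$<$'' argument at this perturbed configuration, absorbing the $\epsilon$-slack into the error gap exactly as in the proof of Lemma~\ref{lemma_case2}. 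Threading this scaled-$\sigma$ approximation through the gradient computation while keeping the probability estimate clean is where I expect most of the effort to go; the derivative algebra and the reduction to Lemma~\ref{lemma_3} are routine. (ii) The degenerate case $\beta_1^{0}=0$, where $g_{K-1}$ is a constant function and the $\alpha_K$-derivative is uninformative: here $g_K$ can instead realize the optimal affine combination of $f_0,\dots,f_K$, which by Lemma~\ref{lemma_1} and the estimate of Lemma~\ref{lemma_4} beats any constant with probability $\ge 1-\tfrac{K+1}{\sqrt N}$, closing the argument.
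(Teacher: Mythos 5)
Your proposal follows essentially the same route as the paper: induction on $K$ with Proposition~\ref{AddOne_Kis2} as the base case, with the key step being that the gradient of the loss at the collapse configuration ($\alpha_K=0$, where $g_K$ reproduces $g_{K-1}$) is nonzero with high probability by the perpendicularity estimates of Lemmas~\ref{lemma_JL} and~\ref{lemma_3}, the union bound over the inherited collapse events giving the $\frac{K+1}{\sqrt N}$ budget, and the nonlinear activation handled via the scaled-$\sigma$ construction of Lemma~\ref{lemma_case2}. The only divergence is one of execution order --- the paper proves the linear-activation case first and then approximates it by the scaled $\sigma$, whereas you differentiate through $\sigma$ directly (introducing the diagonal matrix of $\sigma'$ values and then invoking the same rescaling to keep it invertible) --- which is a somewhat more careful treatment of the same idea rather than a different argument.
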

\begin{proposition}\label{AddOne_Kis2}
Consider the case of only two pre-trained models $f_0$ and $f_1$. There exists $(\alpha_0,\alpha_1)\in \mathbb{R}^2$ s.t.
\small{
$$
 \sum_{i\in[N]}{(f_1(\mathbf{x}^{(i)})-y^{(i)})^2}>\sum_{i\in[N]}{\left(\alpha_0 f_0(\mathbf{x}^{(i)})+\alpha_1 f_1(\mathbf{x}^{(i)})-y^{(i)}\right)^2}
$$}\normalsize
with a probability of at least $1-\frac{2}{\sqrt{N}} $.
\end{proposition}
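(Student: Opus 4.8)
The plan is to treat this proposition as the $K=1$ specialization of the linear-activation analysis of Case~1, re-derived in self-contained form so that it can serve as the induction base for Lemma~\ref{AddOne}. Write $\vec{f}_0=(1,\dots,1)$, $\vec{f}_1=\big(f_1(\mathbf{x}^{(1)}),\dots,f_1(\mathbf{x}^{(N)})\big)$, and $\vec{y}=(y^{(1)},\dots,y^{(N)})$ as vectors in $\mathbb{R}^N$, and set $Q(\alpha_0,\alpha_1)=\|\alpha_0\vec{f}_0+\alpha_1\vec{f}_1-\vec{y}\|^2$. The right-hand side of the claimed inequality is $Q(\alpha_0,\alpha_1)$, while the left-hand side is $Q(0,1)=\|\vec{f}_1-\vec{y}\|^2=\mathcal{E}(f_1)$; hence it suffices to produce, with probability at least $1-2/\sqrt{N}$, a pair $(\alpha_0,\alpha_1)$ with $Q(\alpha_0,\alpha_1)<Q(0,1)$. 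This is precisely Lemma~\ref{lemma_4} read at $K=1$, with $[K]^{+}=\{0,1\}$ and $g_{\mathbf{\Theta}}=\theta_0 f_0+\theta_1 f_1$.

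I would then invoke Assumption~A1, which makes $\vec{f}_0$ and $\vec{f}_1$ linearly independent, so the $2\times2$ Gram matrix $\big[\langle \vec{f}_i,\vec{f}_j\rangle\big]_{i,j\in\{0,1\}}$ is positive definite and invertible. As in Lemma~\ref{lemma_1} and Corollary~\ref{FormulaSolution}, $Q$ is then strictly convex with the unique minimizer $\mathbf{\Theta}^{*}=(\alpha_0^{*},\alpha_1^{*})=\big[\langle \vec{f}_i,\vec{f}_j\rangle\big]^{-1}\big[\langle \vec{f}_i,\vec{y}\rangle\big]$. Since $(0,1)$ is a feasible point, $Q(\alpha_0^{*},\alpha_1^{*})\le Q(0,1)=\mathcal{E}(f_1)$, with equality if and only if the minimizer coincides with $(0,1)$, i.e. $\mathbf{\Theta}^{*}=\vec{e}_1$; Assumption~A2 guarantees $\mathcal{E}(f_1)>0$, so the situation is non-degenerate. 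Thus the strict inequality holds for $(\alpha_0,\alpha_1)=(\alpha_0^{*},\alpha_1^{*})$ whenever $\mathbf{\Theta}^{*}\neq\vec{e}_1$.

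Finally I would bound $\Pr\{\mathbf{\Theta}^{*}=\vec{e}_1\}$. Specializing the first-order optimality condition of Eq.~(\ref{eq4}) to $K=1$: if $\mathbf{\Theta}^{*}=\vec{e}_1$, then $2\langle\vec{f}_1-\vec{y},\vec{f}_i\rangle=0$ for $i\in\{0,1\}$, and in particular $\langle \vec{f}_1-\vec{y},\vec{f}_1\rangle=0$. Lemma~\ref{lemma_3}, applied to the given $\vec{y}$ and the randomly drawn component output $\vec{f}_1$, gives $\Pr\{\langle\vec{f}_1-\vec{y},\vec{f}_1\rangle=0\}<1/\sqrt{N}$, hence $\Pr\{\mathbf{\Theta}^{*}=\vec{e}_1\}<1/\sqrt{N}\le 2/\sqrt{N}$, which yields the claim; a slightly looser route that reproduces the bound $1-2/\sqrt{N}$ verbatim is to union-bound over both unit vectors $\vec{e}_0$ and $\vec{e}_1$ exactly as in the proof of Lemma~\ref{lemma_4}. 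I do not expect a real obstacle here, since the statement is a corollary of the Case~1 machinery; the only delicate points are (i) isolating the unique bad configuration $\mathbf{\Theta}^{*}=\vec{e}_1$ (the one setting in which no affine combination can beat $f_1$), and (ii) the already-adopted convention of treating $\vec{f}_1$ as a vector to which the Johnson--Lindenstrauss estimate of Lemma~\ref{lemma_3} may be applied.
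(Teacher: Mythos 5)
Your proposal is correct and follows essentially the same route as the paper: both reduce the failure event to the vanishing of the first-order conditions at $(\alpha_0,\alpha_1)=(0,1)$ (in particular $\langle\vec{f}_1-\vec{y},\vec{f}_1\rangle=0$) and then apply Lemma~\ref{lemma_3}, with the paper using a union bound over $\vec{e}_0,\vec{e}_1$ to land exactly on $2/\sqrt{N}$. The only cosmetic difference is that the paper argues locally (a nonzero gradient of the difference function $D$ at $(0,1)$ already yields an improving direction, so the Gram-matrix/global-minimizer machinery you invoke is not needed), but this does not change the substance.
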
\label{proposition}
\begin{proof}(of Proposition \ref{AddOne_Kis2})\\
Let 
$$\aligned
 &D(\alpha_0,\alpha_1)\\
 &=\sum_{i\in[N]}{(f_1(\mathbf{x}^{(i)})-y^{(i)})^2-\left(\alpha_0 f_0(\mathbf{x}^{(i)})+\alpha_1 f_1(\mathbf{x}^{(i)})-y^{(i)}\right)^2}.
\endaligned $$
First observe that $D(0,1)=0$ and hence if $\nabla D(0,1)\neq (0,0)$ then it is easy to know that $\exists (\alpha_0^{*},\alpha_1^{*})$ s.t. $D(\alpha_0^{*},\alpha_1^{*})>0$.
$$
\nabla D(\alpha_0,\alpha_1)=
-2\cdot 
\begin{bmatrix}
\langle \alpha_0 \vec{f}_0 +\alpha_1 \vec{f}_1-\vec{y},\vec{f}_0 \rangle \\
\langle \alpha_0 \vec{f}_0 +\alpha_1 \vec{f}_1-\vec{y},\vec{f}_1 \rangle
\end{bmatrix} 
$$
Then, by considering $(\alpha_0,\alpha_1)=(0,1)$ we have
$$
\nabla D(0,1)=
-2\cdot 
\begin{bmatrix}
\langle  \vec{f}_1-\vec{y},\vec{f}_0 \rangle \\
\langle  \vec{f}_1-\vec{y},\vec{f}_1 \rangle
\end{bmatrix} 
$$
Apply Lemma \ref{lemma_3},  
$$\aligned
&\Pr\left\{\nabla D|_{\Theta^*=\vec{e_{j^*}}}
={\vec{0}} \right\}\\
&\leq \Pr\{\exists j\in[1]^{+} s.t.\langle \vec{f}_j-\vec{y},\vec{f}_j\rangle=0 \}\\ &<\frac{2}{\sqrt{N}}
\endaligned$$
That is, 
$$\aligned
& \Pr\{\exists (\alpha_0,\alpha_1) s.t. D(\alpha_0,\alpha_1)>0 \} \\
& \geq \Pr\{\nabla D(0,1)\neq \vec{0} \} \\
& >1-\frac{2}{\sqrt{N}}
\endaligned$$
\end{proof}

\begin{proof}(of Lemma \ref{AddOne})\\
We first prove this lemma of linear activation, and then similar to previous section apply Lemma \ref{lemma_case2} to address the non-linear activation. For the linear activation, it can be proved by induction.

\textbf{Base case:} It is done in Proposition \ref{AddOne_Kis2}. 

\textbf{Inductive step:} Suppose as $J=k-1$ the statement is true. That is, $g_{k-1}=L_{\Theta}(f_1,...,f_{k-1})$ and with probability at least $1-\frac{K}{\sqrt{N}}$
, there is $\mathbf{\Theta}$ s.t.  $\mathcal{E}\left(g_{K-2}\right)> \mathcal{E}_{\mathbf{\Theta}}\left(g_{K-1}\right)$.
As $J=k$, let $f_0$ and $f_1$ in Proposition \ref{AddOne_Kis2} be $g_{k-1}$ and $f_k$ respectively. Then we have  $\alpha_0g_{k-1}+\alpha_1 f_k$ as the composite network. 
Repeat the argument in previous Proposition, then we can conclude with probability at least $1-\frac{k+1}{\sqrt{N}}$ there is $(\alpha_0,\alpha_1)$ s.t.  $\mathcal{E}\left(g_{K-1}\right)> \mathcal{E}_{\mathbf{\Theta}}\left(\alpha_0g_{k-1}+\alpha_1 f_k\right)$. Note that $\alpha_0g_{k-1}+\alpha_1 f_k$ is a possible form of $g_{K}$. So the statement holds. The details are as follows:  
\scriptsize{$$\aligned
 &D(\alpha_0,\alpha_1)\\
 &=\sum_{i\in[N]}{(g_{k-1}(\mathbf{x}^{(i)})-y^{(i)})^2-\left(\alpha_0 g_{k-1}(\mathbf{x}^{(i)})+\alpha_1 f_{k}(\mathbf{x}^{(i)})-y^{(i)}\right)^2}.
\endaligned $$}\normalsize 
First observe that $D(1,0)=0$ and hence if $\nabla D(1,0)\neq \vec{0} $ then it is easy to know that $\exists (\alpha_0^{*},\alpha_1^{*})$ s.t. $D(\alpha_0^{*},\alpha_1^{*})>0$.
$$
\nabla D(\alpha_0,\alpha_1)=
-2\cdot 
\begin{bmatrix}
\langle \alpha_0 \vec{g}_{k-1} +\alpha_1 \vec{f}_k-\vec{y},\vec{g}_{k-1}\rangle \\
\langle \alpha_0 \vec{g}_{k-1} +\alpha_1 \vec{f}_k-\vec{y},\vec{f}_k \rangle 
\end{bmatrix} 
$$
Then, 
$$
\nabla D(1,0) =
-2\cdot 
\begin{bmatrix}
\langle  \vec{g}_{k-1}-\vec{y},\vec{g}_{k-1} \rangle \\
\langle  \vec{g}_{k-1}-\vec{y},\vec{f}_k \rangle)
\end{bmatrix} 
$$
Apply Lemma \ref{lemma_4} and by Induction hypothesis, we have 
$$\aligned
& \Pr\left\{\nabla D|_{\Theta^*=\vec{e_{j^*}}}
={\vec{0}} \right\}\\
&\leq \Pr\{\langle \vec{g}_{k-1}-\vec{y},\vec{g}_{k-1} \rangle=0\} +\Pr\{\langle \vec{f}_{k}-\vec{y},\vec{f}_{k} \rangle=0 \}\\ 
&<\frac{k}{\sqrt{N}}+\frac{1}{\sqrt{N}}=\frac{k+1}{\sqrt{N}}
\endaligned$$
Thus, 
$$\aligned
& \Pr\{\exists (\alpha_0,\alpha_1) s.t. D(\alpha_0,\alpha_1)>0 \} \\
& \geq \Pr\left\{\nabla D|_{\Theta^*=\vec{e_{j^*}}}
\neq {\vec{0}} \right\} \\
& >1-\frac{k+1}{\sqrt{N}}
\endaligned$$
This completes the inductive step.

For the non-linear activation, repeat the argument of Lemma 7 to obtain a proper $g_{\mathbf{\Theta}_{\epsilon}}$ corresponding to the given $\epsilon$ and the linear mapping $g_{\mathbf{\Theta}_{0}^{*}}$, and a small enough $\epsilon$ can yield a proper $\mathbf{\Theta}_{\epsilon}$ that fits the conclusion of $\mathcal{E}(g_{K-1})>\mathcal{E}_{\mathbf{\Theta}_{\epsilon}}(g_{K})$.
The probability of existence is inherently obtained as the same as in Lemma 7. 
\end{proof}

Proposition~\ref{AddOne_Kis2} can be proved by solving the inequality directly for the case of $K=2$, and then generalizing the result to larger $K$ by induction with the help of Lemma~\ref{lemma_3} to prove Lemma~\ref{AddOne}. 
Adding a new component $f_K$ to a composite network $g_{K-1}$ as in Q2, the depth of resulting $g_K$ increments by 1. If $\vec{g}_{K-1}$ and $\vec{f}_K$ satisfy A1 and A2, consider $\{g_{K-1},f_K\}$ as a new set of $\{f_1,f_2\}$ in the same layer. Consequently, we can apply the arguments in Case 2 of Theorem \ref{theorem1} to show 
Lemma~\ref{addDeep} in the following, which answers Q2 and says the resulting $g_K$ has a minimizer $\mathbf{{\Theta}^*}$ such that with high probability the loss decreases.
\begin{lemma}\label{addDeep}
Set $g_{K}=L_{(1)}(\sigma(L_{(0)}((g_{K-1},f_K))$.  If $\vec{g}_{K-1}$ and $\vec{f}_K$ satisfy A1 and A2, then with a probability of at least $1-\frac{2}{\sqrt{N}}$, there is $\mathbf{\Theta}$ s.t.  $\mathcal{E}\left(g_{K-1}\right)> \mathcal{E}_{\mathbf{\Theta}}\left(g_{K}\right)$.
\end{lemma}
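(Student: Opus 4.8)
The plan is to reduce Lemma~\ref{addDeep} to the two–component analysis already carried out in Proposition~\ref{AddOne_Kis2} and in Case~2 of Theorem~\ref{theorem1}, treating the trained network $g_{K-1}$ and the new pre-trained component $f_K$ as the two inputs $\{f_1,f_2\}$ of the single activation layer $L_{(1)}\circ\sigma\circ L_{(0)}$. Because $\vec{g}_{K-1}$ and $\vec{f}_K$ are assumed to satisfy A1 and A2, this pair is admissible in the sense demanded by all the lemmas of Section~\ref{Theory_training_rmse}, so no new machinery is required — only a re-instantiation of the existing arguments with $g_{K-1}$ in place of $f_1$ and $f_K$ in place of $f_2$. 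It further suffices to work inside the bias–free two–parameter subfamily $\alpha_0 g_{K-1}+\alpha_1 f_K$ of the full parametrization of $g_K$, since only an existence claim is wanted.

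For a linear activation I would mirror the inductive step of Lemma~\ref{AddOne}. Set
\[
D(\alpha_0,\alpha_1)=\sum_{i\in[N]}\Bigl((g_{K-1}(\mathbf{x}^{(i)})-y^{(i)})^2-\bigl(\alpha_0 g_{K-1}(\mathbf{x}^{(i)})+\alpha_1 f_K(\mathbf{x}^{(i)})-y^{(i)}\bigr)^2\Bigr),
\]
so that $D(1,0)=0$ and a short computation gives
\[
\nabla D(1,0)=-2\bigl[\,\langle \vec{g}_{K-1}-\vec{y},\vec{g}_{K-1}\rangle,\ \langle \vec{g}_{K-1}-\vec{y},\vec{f}_K\rangle\,\bigr]^{\top}.
\]
If $\nabla D(1,0)\neq\vec{0}$, then stepping from $(1,0)$ along this gradient strictly increases $D$, producing $(\alpha_0^{*},\alpha_1^{*})$ with $D(\alpha_0^{*},\alpha_1^{*})>0$, i.e.\ a $\mathbf{\Theta}$ with $\mathcal{E}(g_{K-1})>\mathcal{E}_{\mathbf{\Theta}}(g_K)$. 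By Lemma~\ref{lemma_3} applied to $\vec{g}_{K-1}$ the first coordinate of $\nabla D(1,0)$ is nonzero except with probability $<1/\sqrt{N}$, and the same estimate for $\vec{f}_K$ handles the second coordinate; a union bound then gives $\Pr\{\nabla D(1,0)=\vec{0}\}<2/\sqrt{N}$, so the desired $\mathbf{\Theta}$ exists with probability at least $1-\tfrac{2}{\sqrt{N}}$.

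For a nonlinear $\sigma$ I would invoke Lemma~\ref{lemma_case2} exactly as in Case~2 of Theorem~\ref{theorem1}: with $g_{\mathbf{\Theta}_0^{*}}$ the linear minimizer just produced, for every $\epsilon>0$ there is $\mathbf{\Theta}_\epsilon$ with $|g_{\mathbf{\Theta}_\epsilon}(\mathbf{x}^{(i)})-g_{\mathbf{\Theta}_0^{*}}(\mathbf{x}^{(i)})|<\epsilon$ for all $i\in[N]$; choosing $\epsilon$ small relative to the strict gap $\mathcal{E}(g_{K-1})-\mathcal{E}(g_{\mathbf{\Theta}_0^{*}})>0$ preserves $\mathcal{E}_{\mathbf{\Theta}_\epsilon}(g_K)<\mathcal{E}(g_{K-1})$, and the probability bound carries over unchanged from the linear case.

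The step I expect to demand the most care is justifying the use of Lemma~\ref{lemma_3} on $\vec{g}_{K-1}$: unlike $f_K$, the vector $\vec{g}_{K-1}$ is the output of a previously trained composite network rather than an independently supplied model, so one must argue it is ``generic'' enough — in particular not pathologically aligned so that $\langle\vec{g}_{K-1}-\vec{y},\vec{g}_{K-1}\rangle$ is forced to vanish — which is exactly the role played by Assumptions A1 and A2 for the pair $\{g_{K-1},f_K\}$ in the hypothesis. The remainder is routine: the quadratic-in-$(\alpha_0,\alpha_1)$ bookkeeping, checking that the gradient perturbation gives a strict loss decrease, and verifying that reattaching the (unused) bias coordinate of $L_{(0)}$ leaves the probability estimate intact.
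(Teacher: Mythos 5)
Your proposal is correct and follows essentially the same route as the paper: treat $\{g_{K-1},f_K\}$ as a fresh two-component pair satisfying A1 and A2, rerun the $D(\alpha_0,\alpha_1)$ gradient argument of Proposition~\ref{AddOne_Kis2} at $(1,0)$ with Lemma~\ref{lemma_3} and a union bound to get the $1-\tfrac{2}{\sqrt{N}}$ estimate, and then pass to nonlinear $\sigma$ via Lemma~\ref{lemma_case2}. In fact you are more explicit than the paper's own proof, which mainly argues that the pair $\{\vec{g}_{K-1},\vec{f}_K\}$ can legitimately be assumed linearly independent and defers the quantitative step to the earlier two-component analysis; the genericity concern you flag about $\vec{g}_{K-1}$ is precisely the point the paper's proof spends its effort on.
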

\begin{proof}(of Lemma \ref{addDeep})\\
Observe that for the given set of pre-tained components $\{f_j\}_{j\in[K]}$ and by the definition of $g_{K-1}$, $f_K$ is not a component of $g_{K-1}$. Hence, if the activation functions used in the construction of $g_{K-1}$ are all linear, the assumption A1 implies that $\vec{g}_{K-1}$ is linear independent of  $\vec{f}_{K}$. Furthermore, if there is at least one non-linear activation function used in the construction of $g_{K-1}$, then as $N$ is large enough, Lemma \ref{lemma_JL} implies that $\vec{g}_{K-1}$ and $\vec{f}_{K}$ are not parallel with a very high probability. This means the assumption that $\vec{g}_{K-1}$ is linear independent of  $\vec{f}_{K}$ is reasonable. Furthermore, this implies that the events $E_1:\exists\mathbf{\Theta}s.t.\mathcal{E}_{\mathbf{\Theta}}(g_{K})<min\{\mathcal{E}(g_{k-1}),\mathcal{E}(f_{k})\}$, and $E_2:\mathcal{E}(g_{K-1})<\cdots< \min_{j\in[K]^{+}}\{\mathcal{E}(f_j)\}$, are independent. Hence, $\Pr\{E_1|E_2\}=\Pr\{E_1\}$.
\tiny
\normalsize
\end{proof}

The proof of Lemma~\ref{addDeep} is similar to the proof of Case 2 in the previous sub-section.
Lemmas~\ref{AddOne} and \ref{addDeep} imply a greedy strategy to build a complicated composite network.
Recursively applying both lemmas, we can build a complicated composite network as desired. Theorem~\ref{DeeperWider} gives a formal statement of the constructed complicated composite network with a probability bound. The proof of Theorem~\ref{DeeperWider} is based on mathematical induction on layers and the worst case probability is over-estimated by assuming each layer could have up to $K$ components.
\begin{theorem}\label{DeeperWider}
For an $H$-hidden layer composite network with $K$ pre-trained components, there exists $\mathbf{\Theta}^*$ s.t.  
\small $$
\mathcal{E}_{\mathbf{\Theta}^*}(g)<\min_{j\in[K]^+}\{\mathcal{E}(f_j)\}
$$\normalsize
with a probability of at least $\left(1-\frac{K+1}{\sqrt{N}} \right)^H$.
\end{theorem}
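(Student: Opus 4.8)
The plan is to prove Theorem~\ref{DeeperWider} by induction on the number of hidden layers $H$, using Theorem~\ref{theorem1} for the base case and Lemmas~\ref{AddOne} and~\ref{addDeep} (together with Lemma~\ref{lemma_case2} for non-linear activations) for the inductive step. Write $g^{(h)}$ for an $h$-hidden-layer composite network built from the available pre-trained components. For $h=1$, the claim is exactly Theorem~\ref{theorem1}: with probability at least $1-\frac{K+1}{\sqrt{N}}=\left(1-\frac{K+1}{\sqrt{N}}\right)^{1}$ there is $\mathbf{\Theta}^{*}$ with $\mathcal{E}_{\mathbf{\Theta}^{*}}(g^{(1)})<\min_{j\in[K]^{+}}\{\mathcal{E}(f_j)\}$, which settles the base case.

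For the inductive step, assume the statement holds at depth $H-1$: there is a network $g^{(H-1)}$ and weights with $\mathcal{E}(g^{(H-1)})<\min_{j\in[K]^{+}}\{\mathcal{E}(f_j)\}$ holding on an event $E_2$ of probability at least $\left(1-\frac{K+1}{\sqrt{N}}\right)^{H-1}$. To form a depth-$H$ network, stack one more layer on top of $g^{(H-1)}$, in the worst case feeding in $g^{(H-1)}$ together with up to $K$ pre-trained components, say $g^{(H)}=L_{(H)}\left(\sigma\left(L_{(H-1)}(g^{(H-1)},f_{j_1},\dots,f_{j_m})\right)\right)$ with $m\le K$. Applying Lemma~\ref{addDeep} to introduce the depth through $g^{(H-1)}$ and then iterating Lemma~\ref{AddOne} to absorb the extra components of the new layer one at a time shows there is a weight assignment $\mathbf{\Theta}$ for the new layer such that, on an event $E_1$ of probability at least $1-\frac{K+1}{\sqrt{N}}$, the new layer beats the best of its immediate inputs, in particular $\mathcal{E}_{\mathbf{\Theta}}(g^{(H)})<\mathcal{E}(g^{(H-1)})$; for a non-linear $\sigma$ at the new layer, the scaled-$\sigma$ construction of Lemma~\ref{lemma_case2} applies verbatim and leaves this probability bound unchanged.

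The key point — and the main obstacle — is the cross-layer independence used to multiply the per-layer factors. As established inside the proof of Lemma~\ref{addDeep}, because the freshly introduced component $f_K$ (and, more generally, the fresh top-layer directions) is not part of $g^{(H-1)}$, Assumption~A1 together with Lemma~\ref{lemma_JL} forces $\vec{g}^{(H-1)}$ to be linearly independent of the new directions with overwhelming probability, so the event $E_1$ (the new layer improves on its inputs) is independent of $E_2$ (the sub-network already beats every $f_j$). Hence $\Pr\{E_1\cap E_2\}=\Pr\{E_1\}\cdot\Pr\{E_2\}\ge\left(1-\frac{K+1}{\sqrt{N}}\right)\left(1-\frac{K+1}{\sqrt{N}}\right)^{H-1}=\left(1-\frac{K+1}{\sqrt{N}}\right)^{H}$, and on $E_1\cap E_2$ we get $\mathcal{E}_{\mathbf{\Theta}^{*}}(g^{(H)})<\mathcal{E}(g^{(H-1)})<\min_{j\in[K]^{+}}\{\mathcal{E}(f_j)\}$, closing the induction.

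Two bookkeeping points require care. First, since a single layer can contain up to $K$ components, the inductive step must iterate Lemma~\ref{AddOne} several times within one layer; one has to verify that the union bound over these at most $K$ sub-steps together with the one depth step of Lemma~\ref{addDeep} telescopes to a \emph{single} factor $1-\frac{K+1}{\sqrt{N}}$ per layer rather than degrading — this is precisely why the per-layer loss probability is stated as $\frac{K+1}{\sqrt{N}}$, with the ``$+1$'' absorbing the bias/depth direction. Second, one must check that Assumptions~A1--A2 are inherited by the intermediate outputs $\vec{g}^{(h)}$ so that all the invoked lemmas remain applicable at every level, again appealing to Lemma~\ref{lemma_JL} for non-parallelism with high probability exactly as in Lemma~\ref{addDeep}. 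With these verified, the induction runs up to depth $H$ and yields the claimed bound $\left(1-\frac{K+1}{\sqrt{N}}\right)^{H}$.
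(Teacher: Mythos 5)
Your proposal is correct and follows essentially the same route as the paper: the paper also telescopes the chain $\mathcal{E}(g_H)<\cdots<\mathcal{E}(g_1)<\min_j\mathcal{E}(f_j)$ into a product of per-layer improvement probabilities, each bounded below by $1-\frac{K+1}{\sqrt{N}}$ (over-estimating by allowing up to $K$ components per layer), and invokes the independence claim from Lemma~\ref{addDeep} to replace each conditional probability by its unconditional counterpart. Your explicit induction and the two bookkeeping remarks are just a more careful phrasing of what the paper itself describes as ``mathematical induction on layers.''
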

\begin{proof}(of Theorem \ref{DeeperWider})\\
For a set of given $K$ pre-trained components, 
$g_{k}:=L_{(k)}(\sigma(L_{(k-1)}(\cdots L_{(1)}(\sigma(L_{(0)}(f_1,\cdots,f_K)))\cdots)))$ is one of possible 
$H$-hidden layer composite network. Hence obviously, 
$$\aligned
&\Pr\left\{\exists\mathbf{\Theta}^{*}:\mathcal{E}(g_{\mathbf{\Theta}^{*}})< \min_{j\in[K]^{+}}\{\mathcal{E}(f_j)\} \right\}\\
&\geq \Pr\left\{\mathcal{E}(g_{H})<\mathcal{E}(g_{H-1})<\cdots<\mathcal{E}(g_{1})< \min_{j\in[K]^{+}}\{\mathcal{E}(f_j)\} \right\}\\
&\geq \Pr\left\{\mathcal{E}(g_{1})< \min_{j\in[K]^{+}}\{\mathcal{E}(f_j)\} \right\} \\ &\mbox{ }\times \Pr\left\{\mathcal{E}(g_{2})<\mathcal{E}(g_{1})\mid\mathcal{E}(g_{1})< \min_{j\in[K]^{+}}\{\mathcal{E}(f_j)\} \right\}\times \cdots \times \\ 
&\mbox{ } \Pr\left\{\mathcal{E}(g_{H})<\mathcal{E}(g_{H-1})\mid \mathcal{E}(g_{H-1})<\cdots< \min_{j\in[K]^{+}}\{\mathcal{E}(f_j)\} \right\}\\
&= \left(1-\frac{K+1}{\sqrt{N}}\right)^H
\endaligned 
$$
The last inequality is based on the fact that 
$$
\min_{k\in[H]}\left\{ P_k \right\} \geq 1-\frac{K+1}{\sqrt{N}},
$$
where \small{
$$\aligned
P_k & =\Pr\left\{\exists\mathbf{\Theta}:\mathcal{E}(g_{k})<\mathcal{E}(g_{k-1})\mid \mathcal{E}(g_{k-1})<\cdots< \min_{j\in[K]^{+}}\{\mathcal{E}(f_j)\} \right\} \\
  & =\Pr\left\{\exists\mathbf{\Theta}:\mathcal{E}(g_{k})<\mathcal{E}(g_{k-1})\right\} \mbox{ by Lemma 9.}
\endaligned$$
}\normalsize
This completes the proof.
\end{proof}

\subsection{Compositing non-instantiated Components}
Now we first consider some of components are pre-trained and some are non-instantiated, and then investigate the hierarchical combination of both kinds of components. In particularly, a sompilfied composite network can be re-written as 
$g(\mathbf{x})=w_0\cdot\sigma\left( \mathbf{\theta}_1 f_1+\mathbf{\theta}_2 f_{\mathbf{W}_2} \right)+b_0$, 
where $f_1$ is a pre-trained component and $f_{\mathbf{W}_2}$ is non-instantiated. Since $\Theta_2$ is not fixed, it can not be checked that LIC and NPC assumptions are satisfied. On the other hand, after initialization, $f_{\mathbf{W}_2}$ can be seen as a pre-trained component at any a snapshot during training phase.

\begin{theorem}\label{mainresult4}
In the end of an weight updating iteration, if the components $f_1$ and $f_{\mathbf{W}_2}$ satisfy LIC and NPC assumptions, then with high probability $\vec{\mathbf{w}}$ updated in the next iteration can improve the loss.
\end{theorem}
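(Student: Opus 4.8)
The plan is to lean on the ``snapshot'' observation made just before the statement: at the end of a weight-updating iteration the matrix $\mathbf{W}_2$ has a definite value, so $f_{\mathbf{W}_2}$ may be frozen and treated as a pre-trained component, say $f_2$. By hypothesis $\{f_1,f_2\}$ then satisfies LIC and NPC, and $g(\mathbf{x})=w_0\,\sigma(\theta_1 f_1+\theta_2 f_{\mathbf{W}_2})+b_0$ is literally a single-layer composite network $L_{(1)}(\sigma(L_{(0)}(f_1,f_2)))$ over $K=2$ components. So Theorem~\ref{theorem1} applies: with probability at least $1-\frac{3}{\sqrt{N}}$ there is a choice $\Theta^{*}$ of the outer weights $(w_0,b_0,\theta_1,\theta_2)$ with $\mathcal{E}_{\Theta^{*}}(g)<\min\{\mathcal{E}(f_1),\mathcal{E}(f_2)\}$; in particular the loss surface of $g$ as a function of the full weight vector $\vec{\mathbf{w}}$ (the outer weights together with $\mathbf{W}_2$) contains points strictly below the current component losses, so there is room to descend.

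Next I would reduce ``the next iteration can improve the loss'' to ``the current iterate $\vec{\mathbf{w}}_t$ is not a critical point of $\mathcal{E}$.'' Since $\sigma$ and $\sigma'$ are $C^1$ (A3), $\vec{\mathbf{w}}\mapsto\mathcal{E}(\vec{\mathbf{w}})$ is $C^1$; hence if $\nabla_{\vec{\mathbf{w}}}\mathcal{E}(\vec{\mathbf{w}}_t)\neq\vec{0}$, one SGD-BP step (A4) with a sufficiently small learning rate gives $\mathcal{E}(\vec{\mathbf{w}}_{t+1})<\mathcal{E}(\vec{\mathbf{w}}_t)$, and if no small step improves then the gradient vanishes. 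So it suffices to show $\Pr\{\nabla_{\vec{\mathbf{w}}}\mathcal{E}(\vec{\mathbf{w}}_t)=\vec{0}\}$ is small. Write $\vec{g}_t=w_0\vec{\sigma}_t+b_0\vec{1}$ with $\vec{\sigma}_t$ the coordinatewise value of $\sigma(\theta_1\vec{f}_1+\theta_2\vec{f}_2)$ over the $N$ samples. If $\nabla_{\vec{\mathbf{w}}}\mathcal{E}=\vec{0}$ then in particular $\partial\mathcal{E}/\partial w_0=2\langle\vec{g}_t-\vec{y},\vec{\sigma}_t\rangle=0$ and $\partial\mathcal{E}/\partial b_0=2\langle\vec{g}_t-\vec{y},\vec{1}\rangle=0$, and forming the combination $w_0(\cdot)+b_0(\cdot)$ yields $\langle\vec{g}_t-\vec{y},\vec{g}_t\rangle=0$. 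By Lemma~\ref{lemma_3} (with $\vec{y}$ the given vector and $\vec{g}_t$ in the role of $\vec{f}$) this event has probability $<\frac{1}{\sqrt{N}}$, so together with the Theorem~\ref{theorem1} event the total failure probability stays $O(1/\sqrt{N})$, i.e.\ ``high probability.'' Nothing in this computation depends on whether $\sigma$ is linear: the identity $\vec{g}_t=w_0\vec{\sigma}_t+b_0\vec{1}$ holds for any output layer of the stated form, and the strictly better configuration for non-linear $\sigma$ is already delivered by Case~2 of Theorem~\ref{theorem1} (equivalently Lemma~\ref{lemma_case2}).

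The hard part will be the step that Lemma~\ref{lemma_3} covers: that lemma concerns a uniformly random vector, whereas $\vec{g}_t$ is a fixed (if complicated) function of the data and of the weights produced by earlier iterations. So, exactly as in the proof of Lemma~\ref{addDeep}, one must argue that the perpendicularity event $\langle\vec{g}_t-\vec{y},\vec{g}_t\rangle=0$ is still negligible --- e.g.\ by treating $\vec{f}_1,\vec{f}_2$ as generic, noting that the event cuts out a codimension-one (measure-zero) set of weight configurations, and appealing to the same Johnson--Lindenstrauss heuristic used elsewhere in the paper. A secondary, more routine gap is ``$\nabla\mathcal{E}\neq\vec{0}\Rightarrow$ the actual SGD-BP update decreases $\mathcal{E}$'': this needs the learning rate small relative to a local Lipschitz constant of $\nabla\mathcal{E}$, which follows from A3 and the boundedness of the finitely many values $f_j(\mathbf{x}^{(i)})$; I would package it as a short standalone lemma rather than re-derive it inline. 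Finally I would note that, since the hypotheses are needed only ``at the end of the iteration,'' the statement iterates: as long as LIC and NPC keep holding at each snapshot, every successive update improves the loss with the same probability bound.
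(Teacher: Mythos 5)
Your core move --- freeze $\mathbf{W}_2$ at the end of the iteration, treat $f_{\mathbf{W}_2}$ as a pre-trained component $f_2$, check that LIC and NPC then hold for $\{f_1,f_2\}$, and invoke Theorem~\ref{theorem1} with $K=2$ --- is exactly the paper's proof; the paper says no more than ``in each iteration $g$ is a combination of fixed-parameter components, hence this reduces to the all-pre-trained case, and Theorems 1 and 2 apply.'' Where you diverge is that you notice the reduction only establishes the \emph{existence} of a better $\mathbf{\Theta}$, not that the actual SGD-BP update in the next iteration realizes an improvement, and you supply the missing bridge: improvement follows unless $\nabla_{\vec{\mathbf{w}}}\mathcal{E}=\vec{0}$ at the current iterate, and the combination $w_0\,\partial\mathcal{E}/\partial w_0+b_0\,\partial\mathcal{E}/\partial b_0=2\langle\vec{g}_t-\vec{y},\vec{g}_t\rangle$ lets you bound the probability of that critical-point event by Lemma~\ref{lemma_3}, with a small-step descent lemma (justified by A3 and the finiteness of the data) closing the loop. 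This is a genuine strengthening: the paper's proof as written does not address the gap between ``a better configuration exists'' and ``the next gradient step improves the loss,'' which is what the theorem literally asserts. The one caveat you already flag yourself is the real one: Lemma~\ref{lemma_3} is stated for a randomly sampled vector, whereas $\vec{g}_t$ is a deterministic function of the data and the training history, so the probability statement rests on the same genericity heuristic the paper uses throughout (e.g.\ in Lemma~\ref{addDeep}); your argument is no weaker than the paper's on this point, but no stronger either.
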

\begin{proof}
Recall the training algorithm is the backpropagation algorithm. Also note that according to Eq. (1), the order of updating is $\vec{\mathbf{\theta}}$ first and then $\Theta_2$. 
We denote in the end of iteration $i$ the value of  $\vec{\mathbf{\theta}}$ and $\Theta_2$ as $\vec{\mathbf{\theta}}^{(iter=i)}$ and $\Theta_2^{(iter=i)}$, respectively. With randomized initialization, $\Theta_2$ is assigned as  $\Theta_2^{(iter=0)}$ before the execution of the iteration $1$. Then in each iteration $i\geq 1$, $g(\mathbf{x})$ is a combination of fixed parameter components. Hence this can reduce to the all pre-trained cases, and can apply Theorem 1 and 2.
\end{proof}

\section{Empirical Studies}\label{experiments}
\text
This section is to numerically verify the performance of composite network for two distinctively different applications, image classification and PM2.5 prediction. 
For image classification, we examined two pre-trained components, the ResNet50 \cite{he2016deep} from Keras and the SIFT algorithm\cite{lowe1999object} from OpenCV, running on the benchmark of ImageNet competition\cite{russakovsky2015imagenet}. 
For PM2.5 prediction, we implemented several models running on the open data of local weather bureau and environment protection agency to predict the PM2.5 level in the future hours.


\subsection{ImageNet Classification}
We chose Resnet50 as the pre-trained baseline model and the SIFT model as an auxiliary model to form a composite neural network to validate the proposed theory. The experiments are conducted on the 1000-class single-label classification task of the ImageNet dataset, which has been a well received benchmark for image classification applications. A reason to choose the SIFT (Scale-Invariant Feature Transform) algorithm is that its function is very different from ResNet and it is interesting to see if the performance of ResNet50 can be improved as predicted from our theory.

We trained the SIFT model using the images of ImageNet, and directed the output to a CNN to extract useful features before merging with ResNet50 output. 
In the composite model, the softmax functions of both ResNet50 and SIFT model are removed that the outputs of length 1000 of both models are merged before the final softmax stage.  During the training process of composite network, the weights of ResNet50 and SIFT model are fixed, and only the connecting weights and bias are trained.

The ResNet50 was from He et al. that its Top-1 accuracy in our context was lower than reported in \cite{he2016deep} since we did not do any fine tuning and data preprocessing. In the Figure \ref{3models}, it shows the composite network has higher accuracy than ResNet50 during almost the complete testing run. Table \ref{Image+Sift} shows the same result that the composite network performs better too. The experiment results support the claims of this work that a composite network performs better than any of its components, and more components work better than less components.

\begin{figure}
\centering{
\includegraphics[width=0.4\textwidth]{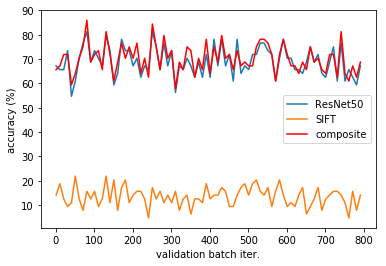}
}
\caption{Image Classification Validation Accuracy} 
\label{3models}
\end{figure}
\begin{table}
  \caption{Validation Error of Image Classification } 
  \label{Image+Sift}
  \centering \small{
  \begin{tabular}{lrrr}  
&&& \\
\toprule
      &  ResNet50 &  SIFT &  composite \\
\midrule 
number of parameter & 25636712  & 2884200  & 3000  \\
Validation Error ($\%$) & 70.1184  & 13.1252  & 71.5079  \\
\bottomrule
  \end{tabular} }
\end{table} \normalsize

\subsection{PM2.5 Prediction}
The PM2.5 prediction problem is to forecast the particle density of fine atmospheric matter with the diameter at most 2.5 $\mu m$ (PM2.5) in the future hours, mainly, for the next 12, 24, 48, 72 hours. The datasets used are open data provided by two sources including Environmental Protection Administration (EPA)\footnote{https://opendata.epa.gov.tw/Home} 
, and Center Weather Bureau (CWB)\footnote{http://opendata.cwb.gov.tw/index}. 
The EPA dataset contains 21 observed features, including the speed and direction of wind, temperature, relative humidity, PM2.5 and PM10 density, etc.,  from 18 monitoring stations, with one record per hour.
The CWB has seventy monitoring stations, one record per 6 hours, containing 26 features, such as temperature, dew point, precipitation, wind speed and direction, etc. 
We partitioned the observed area into a grid of 1140 km$^2$ with 1 km$\times$1 km blocks and aligned the both dataset into one-hour period. We called the two datasets as air quality and weather condition dataset. 

We selected ConvLSTM (Convolution LSTM) and FNN (fully connected neural network) as the components used in this experiment. The reason to select ConvLSTM is that  the dispersion of PM2.5 is both spatially and temporally dependent and ConvLSTM is considered capable of catching the dependency, and FNN is a fundamental neural network that acts as the auxiliary component in the experiment.

The prediction models were trained with the data of 2014 and 2015 years, then the 2016 data was used for testing. 
We considered two function compositions, the linear combination $\Theta$ and the Logistic function $\sigma_1$ (as Theorem 2), to combine the two components to examine the applicability of the proposed theorems.

\subsection{Pre-trained Component Design}
Here we introduce the design rationales of the five pre-trained components in this evaluation. 
As PM2.5 dispersion is highly spatially and temporally dependent, we designed four pre-trained components as base components to model this dependency. Among these, two were convolutional LSTM neural networks (ConvLSTMs~\cite{xingjian2015convolutional}) with the EPA data (denoted as $f_1$) and CWB data (denoted as $f_2$) as input; the other two were fully connected neural networks (FNNs) with the EPA data (denoted as $f_3$) and CWB data (denoted as $f_4$) as input. To model the temporal relationship conveniently using the neural network, the data was fed to the pre-trained components one sequence at a time. We used two pairs of components---$f_1$ and $f_2$, and $f_3$ and 
$f_4$---for the same functions to determine whether component redundancy improves performance.  The fifth pre-trained component (denoted as $f_5$) was to model the association between time and the PM2.5 value.

\begin{table}[ht]
  \centering \tiny{
  \caption{Various configurations of pre-trained components}
  \label{Table_DepthOptimal}
\begin{tabular}{lr|rr|rr|rr}  
\toprule
& {Forecast}   & \multicolumn{2}{c}{+24h}   & \multicolumn{2}{c}{+48h}  & \multicolumn{2}{c}{+72h} \\
               \cmidrule(r){3-4} \cmidrule(r){5-6}\cmidrule(r){7-8}  
Model & {Train.Params} &{Training} & {Testing}  & {Training} & {Testing} &  {Training} & {Testing}  \\
\midrule
$f_{1}$  & 917492 & 7.5873 & \textbf{10.5789}  &8.6541 & \textbf{11.3904}  & 8.8170 & \textbf{11.5279} \\ 
$f_{1,Wr}$  & 3632482 & 9.3054 & 11.9440   &9.1503 & 11.6550  & 8.1616 & 11.7556  \\ 
$f_{1,Dr}$  & 1278692 & 7.6342 & 10.9471   &8.6297 & 11.4844  &9.0803 & 11.5993 \\ 
\midrule
$f_{2}$  & 916908 & 8.5529 & \textbf{11.2074}   & 8.2890 & \textbf{11.7081}  & 9.2177 & \textbf{11.7756} \\ 
$f_{2,Wr}$  & 3631322 & 7.0685 & 11.4974   & 9.2233 & 12.0710  & 9.1766 & 11.9827 \\ 
$f_{2,Dr}$  & 790828 & 6.5404 & 11.7970   &8.4491 & 8.4491  & 9.1500 & 11.9162 \\ 
\midrule
$f_{3,(2)}$  & 1038054  &11.6064 & \textbf{10.8907} & 11.9008 & \textbf{11.6977}   &12.1729 & 11.9999  \\  
$f_{3,(3)}$  & 1068538  & 11.5648 &10.9179  & 11.9726 & 11.7017  & 12.0585 & \textbf{11.9414} \\ 
\midrule
$f_{4,(2)}$  & 582038 & 11.8238 & 11.3400 & 11.6948 & \textbf{11.6147}  & 11.9484 &11.8687 \\  
$f_{4,(3)}$  & 603318 & 11.8253 & \textbf{11.2748}  & 11.7112 & 11.6176  & 12.0199 &\textbf{11.7512} \\ 
\bottomrule
\end{tabular} }
\end{table} 

\begin{table}[ht]
\tiny{
\caption{Pre-trained components and testing RMSE}
  \label{Table_f1f2f3f4f5}
\begin{tabular}{llrrr}  
\toprule
Component & Data    & +24h   &+48h  & +72h \\
\midrule
$f_1$: ConvLSTM (2 CNN layers, 1 LSTM) & EPA& 10.5789   & 11.3904  & 11.5279 \\  
$f_2$: ConvLSTM (2 CNN layers, 1 LSTM) & CWB& 11.2074  & 11.7081  & 11.7756  \\  
$f_3$: FNN (2 hidden layers) & EPA & 10.6459  & 11.3291  & 11.6169 \\  
$f_4$: FNN (2 hidden layers) & CWB & 11.5112 &  11.6915  &  11.8017 \\  
$f_5$: LSTM & hr-week-month & 11.4738  &  11.5359 &  11.4540 \\  
\midrule
\multicolumn{5}{l}{EPA 9 features: CO, NO, NO2, NOx, O3, PM10, PM2.5, SO2, THC } \\
\multicolumn{5}{l}{CWB 5 features: AMB-TEMP, RH, rainfall, wind direction-speed (represented as a vector) }
\\
\bottomrule
\end{tabular} }
\end{table} 

There are five pre-trained components from $f_1$ to $f_5$ and one non-instantiated auxiliary component, denoted as $f_{W_6}$, for the composite network construction.  The model of $f_{W_6}$ is a convolutional neural network (CNN) with CWB weather data and forecasts as input to predict upcoming precipitation. The six components are connected by activation functions, either a linear function or a scaled logistic function  ($S(z)=2000/(1+e^{-z/500})-1000$). Note that any activation function that meets all six assumptions in Sec.~3 could be used; for simplicity, we used only the scaled logistic function. 
The prediction accuracy in RMSE of all five pre-trained components is listed in Table~\ref{Table_f1f2f3f4f5}. Note that in this study we did
not set out to design an optimized composite network for the best PM2.5 prediction. Rather, our main purpose was to implement and evaluate the proposed composite network theory. Nevertheless, the design of components and composite network follows the advice of domain experts and exhibits reasonably good performance in PM2.5 prediction.

\begin{table}[ht]
\hfill{} \tiny{
\caption{Summary of all methods (RMSE)}
  \label{Summary_all_methods}
\begin{tabular}{lr|rr|rr|rr}  
\toprule
&{}   & \multicolumn{2}{l}{+24h}   & \multicolumn{2}{l}{+48h}  & \multicolumn{2}{l}{+72h} \\
               \cmidrule(r){3-4} \cmidrule(r){5-6}\cmidrule(r){7-8}  
Method & {Trainable} &{Training} &{Testing}  & {Training} & {Testing} &  {Training} & {Testing}  \\
\midrule
SVM & - &  11.6440 & 10.9117 & 12.1246 & 11.5469 & 12.1670 &  11.6376     \\ 
Random forests & - & 3.3181  & 10.9386  & 3.4304 & 11.9037 &  3.4148 &  12.0917  \\ 
Ensemble & 1638 & 11.6955  & 11.0200  & 12.2609 & 11.3969 &  12.6605 &  11.6119  \\ 
\emph{SL}(Ensemble) & 1638  & 11.5855 & 10.9184 & 12.2080 & 11.2815 & 12.5690 &  11.5411    \\ 
DBCN$_{ \mathsf{Relu}}$ & 2664  & 12.4800 & 11.4540  & 13.3464 & 12.1947 &  14.0421 &  12.6546    \\ 
DBCN$_{ \mathsf{Sigm}}$ & 4032  & 11.7786 & 10.9803  & 13.6521 & 12.4418 &  13.4414 &  12.2825    \\
DBCN  & 2664  & 7.0560 & \textbf{10.2119}  & 8.0678 & \textbf{11.0469} &  8.2305 &  \textbf{11.4274}    \\ 
BBCN$_{\mathsf{Relu}}$   & 2664 & 13.3711 & 12.4575  & 14.6168 & 13.2662 &  15.8200 &  14.0754 \\ 
BBCN$_{\mathsf{Sigm}}$  &  4032 & 12.5376 & 11.4600  & 13.0951 & 12.2047 &  13.5416 & 12.0388 \\ 
BBCN  &  2664 & 7.1069 & 10.4712  & 7.9949 & 11.0935 &  8.4460 &  11.5100 \\ 
Exhaustive-a  & 2664  & 5.1646 & \textbf{9.2438}  & 5.0981 & \textbf{10.2402} &  6.7830 &  \textbf{10.4265}    \\ 
\midrule
\midrule
\multicolumn{8}{l}{(Include $f_{W_6}$)} \\ 
Ensemble  & 43684  & 11.5253 & 10.7338  & 12.4490 & 11.1874 &  12.5822 &  11.4804   \\
\emph{SL}(Ensemble)  & 43684  & 11.5117 & 10.8125  & 12.3939 & 11.1628 &  12.7025 &  11.3376   \\
DBCN$_{\mathsf{Relu}}$ & 44710  & 12.9434 & 11.8209  & 14.3413 & 12.8331 &  14.3562 & 12.7689    \\ 
DBCN$_{\mathsf{Sigm}}$  & 46420 & 11.9444 & 10.9167 & 12.1700 & \textbf{10.9474} &  13.2754 &  11.8630   \\ 
DBCN  &  44710 & 6.9705 & \textbf{10.1053}  & 7.8941 & 10.9531 &  8.2448 &  \textbf{11.2541}   \\
BBCN$_{\mathsf{Relu}}$   &  44710 & 11.4985 & 10.5742  & 12.0386 & 11.0392 &  12.7188 &  11.4047 \\ 
BBCN$_{\mathsf{Sigm}}$  & 46420 & 12.4675 & 11.3664  & 13.1786 & 11.9285 &  13.3815 & 11.8680 \\ 
BBCN &  44710  & 6.9828 & 10.1938  & 8.5736 & 11.0182 &  9.1848 &  11.4153  \\
Exhaustive-a  & 44710  & 5.5986 & \textbf{9.1971}  & 5.1292 & \textbf{10.2190} &  7.9572 &  \textbf{10.3588}    \\
Exhaustive-b &  44710 & 6.6991 & \textbf{9.2591}  & 5.6125 & \textbf{10.0632} &  5.7376 &  \textbf{10.2671}    \\
\bottomrule
\end{tabular} }
\hfill{}
\end{table}

We trained and tested both ConvLSTM and FNN using air quality dataset (Dataset A) and weather condition dataset (Dataset B) separately as the baselines (denoted as $f_1$, $f_2$, $f_3$ and $f_4$) and their training error and testing error in MSE are list in the first part of Table \ref{Errors of PM2.5}. Then we composited FNNs using Dataset A and Dataset B, each FNN can be pre-trained (denoted as x) or non-instantiated (denoted as o). In addition, we used both linear and Sigmoid activation functions. As a result, we had eight combinations, as list in the part two. We treated ConvLSTM in the same way and the outcomes were in the part 3. Finally, we composited using one FNN and one ConvLSTM that each was the best in their category, and the resulting composite network was a tree of depth 2. For instance, the candidate of ConvLSTM of part 4 for 12 hours prediction was the 4th row (i.e., $\Theta$($f_3^{\circ}$,$f_4^{\circ}$)) of part 3. Their training and testing errors in MSE were listed in the part 4. 

From the empirical study results, it shows mostly the proposed theorems are followed. While the composite networks with all pre-trained components may not perform better than others in their category, (which is not a surprise), what we expect to see is after adding a new component, the composite network has improvement over the previous one. For example, the $\sigma\circ\Theta(f_3^{\times},f_4^{\times})$ has strictly better accuracy than both $f_3$ and $f_4$ for all future predictions. Another example is the NEXT 48 hr, $\sigma\circ\Theta(C^{\times},F^{\times})$ also has strictly better accuracy than both $C=\sigma\circ\Theta(f_3^{\circ},f_4^{\circ})$ and $F=\sigma\circ\Theta(f_3^{\circ},f_4^{\circ})$.

\section{Related Work}
In this section, we discuss related work in the literature from the perspective of the composite network framework and PM2.5 prediction. For the framework, the composite network is related to the methods such as ensemble learning~\cite{zhou2012ensemble}, transfer learning~\cite{erhan2010does,kandaswamy2014improving,yao2010boosting} and model reuse~\cite{yang2017deep,wu2019heterogeneous}. We will also discuss some representative work on air quality prediction.

\textbf{Ensemble Learning.}
Typical ensemble learning methods include bagging, boosting, stacking, and linear combination/regression. 
Since bagging groups data by sampling and boosting tunes the probability of data~\cite{zhou2002ensembling}, these frameworks are not similar to composite neural networks. However, there are fine research results that are instructive for accuracy improvement~\cite{dvzeroski2004combining,gashler2008decision,zhou2002ensembling}. For example, it is known that in the ensemble framework, low diversity between members can be harmful to the accuracy of their ensemble~\cite{dvzeroski2004combining,gashler2008decision}.  
In this work, we consider the neural network composition, but not data enrichment.

Among the ensemble methods, stacking is closely related to our framework. The idea of stacked generalization~\cite{wolpert1992stacked}, in Wolpert's terminology, is to combine two levels of generalizers. The original data are taken by several level-0 generalizers, after which their outputs are concatenated as an input vector to the level-1 generalizer. According to the empirical study of Ting and Witten~\cite{ting1999issues}, the probability distribution of the outputs from level 0, instead of their values, is critical to accuracy. Their experimental results also imply that  multi-linear regression is the best level-1 generalizer, and a non-negative weight restriction is necessary for regression but not for classification.
However, our analysis shows that activation functions that satisfy Assumption A3 
have a high probability guarantee of reducing the L2 error. In addition, our empirical evaluations show that the scaled logistic activation usually performs well. 

The work of Breiman~\cite{breiman1996stacked} restricts non-negative combination weights to prevent poor generalization errors and concludes that it is not necessary to restrict the sum of weights to equal 1. In \cite{hashem1997optimal}, Hashem shows that linear dependence of components could be, but is not necessarily always, harmful to ensemble accuracy, whereas our work allows a mix of pre-defined and non-instantiated components as well as negative weights to provide flexibility in solution design. 

\textbf{Transfer Learning.} 
In the context of one task with a very small amount of training data with another similar task that has sufficient data, transfer learning can be useful~\cite{pan2009survey}. 
Typically the two data sets---the source and target domains---have different distributions.
A neural network such as an auto-encoder is trained with source-domain data and the corresponding hidden layer weights or output labels are used for the target task. 
Part of transplanted weights can be kept fixed during the consequent steps, whereas others are trainable for fine-tuning~\cite{erhan2010does}. This is in contrast to the composite neural network, in which the pre-trained weights are always fixed. For multi-source transfer, boosting-based algorithms are studied in \cite{yao2010boosting}. 
Kandaswamy et al.~\cite{kandaswamy2014improving} propose cascading several pre-trained layers to improve performance. Transfer learning can be considered a special case of the composite neural network if the source-domain neural network is fixed during target training.

\textbf{Ensemble (Bagging and Boosting).}
Since the Bagging needs to group data by sampling and the Boosting needs to tune the probability of data \cite{zhou2002ensembling}, these frameworks are different from composite neural network. However, there are fine research results revealing many properties for accuracy improvement \cite{dvzeroski2004combining,gashler2008decision,zhou2002ensembling}. For example, it is known that in the ensemble framework, low diversity between members can be harmful to the accuracy of their ensemble \cite{dvzeroski2004combining,gashler2008decision}.  
In this work, we consider neural network training, but not   data processing.

\textbf{Ensemble (Stacking).} Among the ensemble methods, the stacking is closely related to our framework. The idea of stacked generalization \cite{wolpert1992stacked}, in Wolpert's terminology, is to combine two levels of generalizers. The original data are taken by several level 0 generalizers, then their outputs are concatenated as an input vector to the level 1 generalizer. According to the empirical study of Ting and Witten \cite{ting1999issues}, the probability of the outputs of level 0, instead of their values, is critical to accuracy. Besides, multi-linear regression is the best level 1 generalizer, and non-negative weights restriction is necessary for regression problem while not for classification problem.
In \cite{breiman1996stacked}, it restricts non-negative combination weights to prevent from poor generalization error and concludes the restriction of the sum of weights equals to 1 is not necessary \cite{breiman1996stacked}. In \cite{hashem1997optimal}, Hashem showed that linear dependence of components could be, but not always, harmful to ensemble accuracy, while in our work, it allows a mix of pre-defined and undefined components as well as negative weights to provide flexibility in solution design. 

\textbf{Model Reuse.}
In recent years some proposed frameworks emphasize the reuse of fixed models~\cite{yang2017deep, feng2018autoencoder, wu2019heterogeneous, ijcai2019-472}. In this framework, pre-trained models are usually connected with the main (i.e., target) model, and then the dependency is gradually weakened by removing or reducing the connections during the training process. In this way, the knowledge of the fixed model is transferred to the main model; the key point is that model reuse is different from transfer learning as well as the composite neural network. 

Pre-trained models are widely applied in applications of natural language processing to improve the generation ability of the main model, such as in BERT~\cite{devlin2018bert} and ELMo~\cite{peters2018deep}.   
Multi-view learning~\cite{zhao2017multi} is another method to improve generalization performance.  In this approach, a specific task owns several sets of features corresponding to different views, just like an object observed from various perspectives, and separate models are trained accordingly. Then, the trained models for different views are combined using co-training, co-regularization, or transfer learning methods.

\section{Conclusion}
In this work, we investigated the composite neural network with pre-trained components problem and showed that the overall performance of a composite neural network is better than any of its components, and more components perform better than less components. In addition, the developed theory consider all differentiable activation functions. 

While the proposed theory ensures the overall performance improvement, it is still not clear how to decompose a complicated problem into components and how to construct them into a composite neural network in order to have an acceptable performance. Another problem  worth some thinking is when the performance improvement will diminish (by power law or exponentially decay) even adding more components. However, in the real world applications, the amount of data, data distribution and data quality will highly affect the performance.

\bibliographystyle{IEEEtran} 
\bibliography{main.bib}  

\newpage
\appendix
The follows are notations and definitions in this paper.
\begin{itemize}
    \item $\mathbf{x}$, an input vector.
    \item $\mathbf{W}$, a matrix of weights in a neural network. 
    \begin{itemize}
        \item $\mathbf{W}_1$, a matrix of weights in a single hidden-layer network.
    \end{itemize}
    \item $\sigma: \mathbb{R}\to \mathbb{R}$, an activation function.
    \item $f_{\sigma,\mathbf{W_1}}(\mathbf{x})\triangleq w_{1,1}\sigma\left(\sum_{i=1}^{d} w_{0,i} \mathbf{x}_i+w_{0,0} \right)+w_{1,0}$,\\ a single-layer neural network.\\
    If there is no ambiguity, it can be shortened to
    \begin{itemize}
        \item $f_{\mathbf{W}}(\mathbf{x})$,  a non-instantiated component. 
        \item $f(\mathbf{x})$, a pre-trained component.
    \end{itemize}
    \item $N$, the number of training data
    \item $K$, the number of components; each component can be either non-instantiated or pre-trained.
    \item $[K]^{+}\triangleq \{0,1,\cdots,K\}$
    \item $h_j$ is either $f_{\mathbf{W}_j}$ or $f_j$.
    \item $h_0\triangleq f_0\triangleq \mathbf{1}$, a constant function. 
    \item $\mathbf{x}_j\in \{{x}_j^{(1)},...,{x}_j^{(N)}\}$, an input of $h_j$.
    \item $L(\Theta; h_1,...,h_K)\triangleq \sum_{j=0}^{K}{ \theta_jh_j(\mathbf{x}_j)}$, a linear combination with bias. 
    \item {\small $L_{\Theta_{(h+1)}}\left(\sigma_{(h+1)}\left( \cdots \sigma_{(1)}\left(L_{\Theta_{(0)}} \left(h_1,...,h_K\right)\right)\right)\right)$}\normalsize, a network with $h$ hidden layers.
    \begin{itemize}
        \item Example: a composite neural network\\ 
 $\sigma_{(2)} ( \theta_{1,0}+ \theta_{1,1} f_{4}(\mathbf{x}_4)+  \theta_{1,2}\sigma_{(1)}( \theta_{0,0}+ \theta_{0,1} f_{1}(\mathbf{x}_1)+ $ $\theta_{0,2} f_{\mathbf{W}_2}(\mathbf{x}_2)+  \theta_{0,3} f_{3}(\mathbf{x}_3)) )$\\ 
 can be denoted as \\
 $\sigma_{(2)}\left( L_{(1)}\left(f_{4}, \sigma_{(1)} \left( L_{(0)}(f_{1},f_{\mathbf{W}_2},f_{3}) \right)\right)\right)$, with  $\mathbf\Theta$s removed for simplicity.
    \end{itemize}
    \item $\langle {\cdot} ,{\cdot} \rangle$, the standard inner product.
    \item $\vec{f}_j\triangleq (f_j(\mathbf{x}^{(1)}),\cdots,f_j(\mathbf{x}^{(N)}))$ 
    \item $\vec{y}\triangleq (y^{(1)},\cdots,y^{(N)})$.
    \item $g_{\mathbf{\Theta}}$, a composite network.
    \item $\mathcal{E}_{\mathbf{\Theta}}\left(\mathbf{x};g_{\mathbf{\Theta}}\right)\triangleq \frac {\langle g_{\mathbf{\Theta}}\left(\mathbf{x}\right)-\vec{y}, g_{\mathbf{\Theta}}\left(\mathbf{x}\right)-\vec{y} \rangle}{N}$, the $L_2$ loss function.\\
    If there is no ambiguity, it can be shortened to
    \begin{itemize}
        \item $\mathcal{E}_{\mathbf{\Theta}}\left(\mathbf{x};g_{\mathbf{\Theta}}\right)$, or $\mathcal{E}\left(g_{\mathbf{\Theta}}\right)$, for non-instantiated component.
        \item $\mathcal{E}(\mathbf{x}_j;f_j)$, or $\mathcal{E}(f_j)$, for pre-trained component.
    \end{itemize}
    \item $\vec{e}_j\in \mathbb{R}^{K+1}$, an unit vector in the standard basis for $j\in[K]^{+}$. 
    \begin{itemize}
       \item $\vec{e}_0=(1,0,0,\cdots,0)$    
       \item $\vec{e}_1=(0,1,0,\cdots,0)$
    \end{itemize}
\end{itemize}

\newpage
{ }
\newpage
{ }

\begin{table}[h]
  \caption{Training and Testing Errors of PM2.5 Prediction}
  \label{Errors of PM2.5}
  \centering \small{
  \begin{tabular}{lrrrrrrrr}  

\toprule
      Model   & \multicolumn{2}{l}{Next 12 hr}   & \multicolumn{2}{l}{Next 24 hr}  & \multicolumn{2}{l}{Next 48 hr}  & \multicolumn{2}{l}{Next 72 hr}  \\
                \cmidrule(r){2-3} \cmidrule(r){4-5}\cmidrule(r){6-7}\cmidrule(r){8-9}  
  (input) & TarinError & TestError  & TarinError & TestError & TarinError & TestError & TarinError & TestError\\
    \midrule 
    $\mathbf{f}_1$: FNN-A  & 100.1812 & 92.8528  & 134.7095 & 118.6065  & 141.6287 & 136.8358 & 148.1807 & 143.9980 \\  
    $\mathbf{f}_2$: FNN-B  & 134.1137 & 120.0019  & 139.8016 & 128.5960 &  136.7693  & 134.9001 & 142.7637 & 140.8650 \\  
    $\mathbf{f}_3$:  \scriptsize{ConvLSTM-A} & 54.2775 & 88.8156 & 57.5677 & 111.9122 & 74.8937 & 129.7418 & 77.7394 & 132.8923 \\ 
    $\mathbf{f}_4$: \scriptsize{ConvLSTM-B}  & 67.8625 & 118.4351 & 73.1519 & 125.6062  & 68.7069 &137.0789& 84.9656 & 138.6642\\ 
\midrule
 $\Theta$($f_1^{\times}$,$f_2^{\times}$)  &  99.7005  & $^{F:}$90.0214  &  130.7800  & 115.9283 &  139.9744 & $^{F:}$132.4764 & 144.6826 &  $^{F:}$137.8403 \\  
 $\Theta$($f_1^{\times}$,$f_2^{\circ}$)  &  95.6804  & 93.0173  &  120.3185  & 117.9781 &  134.3893 & 134.0270 & 139.6226 &  140.5209 \\  
 $\Theta$($f_1^{\circ}$,$f_2^{\times}$)  & 95.8110   & 93.1131  &  121.9737  & 117.7771  &  134.0676 & 135.2255 & 136.2009 & 144.0116 \\  
 $\Theta$($f_1^{\circ}$,$f_2^{\circ}$)  &  101.1584  & 90.2671 &  126.6807  & 114.5264  & 132.6726  & 132.8069 & 139.2339 &  139.3322 \\  
 $\sigma\circ\Theta$($f_1^{\times}$,$f_2^{\times}$)  &  102.7556  & 90.6280 &  133.1453  &117.7397 &  135.9256 &133.2544 & 145.1052 & 139.2513  \\  
 $\sigma\circ\Theta$($f_1^{\times}$,$f_2^{\circ}$)  &  98.1241  & 93.1098 &  127.4999  &118.8107 & 135.1553  &134.1469 & 137.7562 & 142.1778 \\  
 $\sigma\circ\Theta$($f_1^{\circ}$,$f_2^{\times}$)  &  94.9931  & 91.4667 &  124.5461  &117.7332& 131.2684  &135.1281 &140.1604  & 144.5220  \\  
 $\sigma\circ\Theta$($f_1^{\circ}$,$f_2^{\circ}$)  & 98.2596   & 91.3646 & 124.4182  &$^{F:}$114.2274 & 134.5078  & 132.8316 & 138.0456 & 139.8351 \\  
\midrule
 $\Theta$($f_3^{\times}$,$f_4^{\times}$)  & 27.1760  & 85.8922  & 49.1624 & 108.3157  & 37.2116 & 123.2186 &60.6415  &  131.0565  \\  
 $\Theta$($f_3^{\times}$,$f_4^{\circ}$)  & 27.1519 & 81.7688  & 42.7932 &  104.2375 & 33.2831 & $^{C:}$110.4213 & 74.3055 & $^{C:}$110.9952 \\  
 $\Theta$($f_3^{\circ}$,$f_4^{\times}$)  &  27.4436  & 78.5360  &  44.3214 &  107.0898 & 31.4910 & 129.1829 & 68.4413 &  139.5661  \\  
 $\Theta$($f_3^{\circ}$,$f_4^{\circ}$)  &  26.3063  &$^{C:}$70.8670 & 40.1879   & 100.8474  &  25.3312
 & 119.1634 & 76.1782 & 120.6814  \\  
$\sigma\circ\Theta$($f_3^{\times}$,$f_4^{\times}$)  & 28.3844 & 84.9029 & 43.2981 &109.3709  & 31.2413 &123.0041  & 63.7286 & 130.4122  \\  
 $\sigma\circ\Theta$($f_3^{\times}$,$f_4^{\circ}$)  & 27.5981 & 80.7848 & 44.4197 &98.1051  & 29.7649 & 111.6793 &69.3182 & 117.0719  \\  
 $\sigma\circ\Theta$($f_3^{\circ}$,$f_4^{\times}$)  & 26.4125 & 78.3990 & 42.3181 & 103.3361 & 30.4183 & 128.4138 &64.4193 &136.6043   \\  
 $\sigma\circ\Theta$($f_3^{\circ}$,$f_4^{\circ}$)  &  26.5131  & 75.5778 & 42.3912 &$^{C:}$94.6242 & 27.5812 & 112.8075 & 70.5132 &  117.6480 \\  
\midrule
 $\Theta$($C^{\times}$,$F^{\times}$)  &  26.6556  & 70.1159  & 34.6885   & 92.2737 & 29.6484  & 107.6833 & 52.0060 &  $^{H:}$110.1283 \\  
 $\Theta$($C^{\times}$,$F^{\circ}$) & 24.2349  & 67.4414 &  31.7202  &90.7795 & 30.1328  &$^{H:}$105.1804 &46.9994  & 111.2227  \\  
 $\Theta$($C^{\circ}$,$F^{\times}$)    &  22.7651  & 75.9468 & 24.2132   &96.3126  & 24.4488 &114.1803 & 49.0663 & 117.2747  \\  
 $\Theta$($C^{\circ}$,$F^{\circ}$)  &  21.9103  & 68.0660  & 20.9072   & 91.9323  & 23.2868  & 112.8605 & 30.6875 &  113.6968 \\   
 $\sigma\circ\Theta$($C^{\times}$,$F^{\times}$)  & 26.5950 & 69.1897 &  34.7400  &92.4715 &  29.9215 & 108.7482 & 52.3708 & 111.5474  \\   
 $\sigma\circ\Theta$($C^{\times}$,$F^{\circ}$)  &  24.0223  & $^{H:}$66.4733 &  28.8401  & $^{H:}$90.7257 & 28.5033  & 108.8896 & 46.2711 & 110.1613 \\   
 $\sigma\circ\Theta$($C^{\circ}$,$F^{\times}$)  &  22.4443   & 83.5953 &  22.5040  & 96.4027  & 28.4714   &112.0727 &  35.3947 & 114.5947 \\    
 $\sigma\circ\Theta$($C^{\circ}$,$F^{\circ}$)  & 38.4899 & 67.1819 & 17.6041  &92.2343  & 33.9710  & 105.7977  & 40.2934 & 110.3585  \\   
\midrule
\multicolumn{9}{l}{Let $\mathbf{f}_5$ be non-instantiated CNN with future rain fall input.}\\
 $\sigma\circ\Theta$($H^{\times}$,$f_5^{\circ}$)  & 24.1487 & 65.5776 & 30.6243 & 87.2777  & 55.9261 &102.2878& 50.5158& 108.8087\\    
 $\sigma\circ\Theta$($H^{\circ}$,$f_5^{\circ}$)  & 58.0852 &68.8111 & 22.8776  & 90.2324  & 39.0996  &112.1639
& 36.0659&109.8240\\    
\bottomrule
\multicolumn{9}{l}{Part 1: pre-trained components $f_i$, $i\in [4]$. Part 2: composite $f_1$ and $f_2$ by linear $\Theta(\cdot)$ or logistic $\sigma\circ\Theta(\cdot)$; similar for Part 3-5 }\\
\multicolumn{9}{l}{$^{\times}$: un-trainable component, i.e. pre-trained. $^{\circ}$: trainable component (original weights was deleted). }\\
\multicolumn{9}{l}{The best model of Part 2 (/Part 3) was assigned as composite model $F$ (/$C$) which will be used in Part 4.}\\
\multicolumn{9}{l}{The best model of Part 4 was assigned as composite model $H$ which will be used in Part 5. }\\
\end{tabular} }
\end{table} \normalsize

\end{document}